\theoremstyle{plain}
\newtheorem{theorem}{Theorem}[section]
\theoremstyle{definition}
\theoremstyle{remark}
\pgfplotsset{compat=1.18} % 设置pgfplots兼容版本
\crefname{figure}{Figure}{Figure}
\title{Auxiliary-Hyperparameter-Free Sampling: Entropy Equilibrium for Text Generation}
\author{
Xiaodong Cai$^{1,3}$\thanks{Equal contribution. Work done during the internship of Cai at Meituan.},
Hai Lin$^{1,5}$\footnotemark[1],
Shaoxiong Zhan$^{1}$,
Weiqi Luo$^{2}$,
Hong-Gee Kim$^{4}$,
Hongyan Hao$^{3}$,
Yu Yang$^{3}$\thanks{Corresponding authors.},
Hai-Tao Zheng$^{1}$\footnotemark[2]
}
\begin{document}
 
% The file aaai.sty is the style file for AAAI Press 
% proceedings, working notes, and technical reports.
%

\maketitle
\begin{abstract}
Token sampling strategies critically influence text generation quality in large language models (LLMs). However, existing methods introduce additional hyperparameters, requiring extensive tuning and complicating deployment. We present Entropy Equilibrium Sampling (EES), an auxiliary hyperparameter-free approach inspired by information theory that can dynamically adjust candidate sets by balancing normalized entropy with probability mass. We evaluate EES on both reasoning and generation tasks across a range of model architectures. Our results show that EES consistently performs well across temperature settings, delivering competitive accuracy and coherence while maintaining diversity. By eliminating the need for hyperparameter tuning, EES greatly simplifies deployment while improving performance. EES is released at \url{https://github.com/shuanncai/EES}.
\end{abstract}
\section{Introduction}

The rapid advancement of large language models (LLMs) has revolutionized natural language generation, enabling diverse applications ranging from creative writing to scientific reasoning~\cite{brown2020language,chowdhery2023palm,touvron2023llama}. As LLMs continue to improve, the quality of generated text depends not only on model architecture and training data, but also greatly on the sampling methods employed during inference~\cite{qin2025dynamic,laskar2024systematic,Xing2024MitigatingCF,zhan2025mathsmith,zhan2025lexsembridge}.

Modern text generation relies heavily on stochastic sampling methods to balance two fundamental objectives: maintaining coherence with the learned distribution while introducing sufficient diversity to avoid repetitive or overly predictable outputs~\cite{nguyen2024turning,fan2018hierarchical,welleck2019neural}. This coherence-diversity trade-off has led to the development of numerous sampling techniques, including nucleus sampling (top-p)~\cite{holtzman2019curious}, typical sampling~\cite{meister2023locally}, and eta sampling~\cite{hewitt2022truncation}. Each method introduces specialized auxiliary hyperparameters to control generation behavior~\cite{welleck2024decoding}.

\begin{figure}[tb]
\centering

% Context图片放在最上面
\includegraphics[width=\columnwidth]{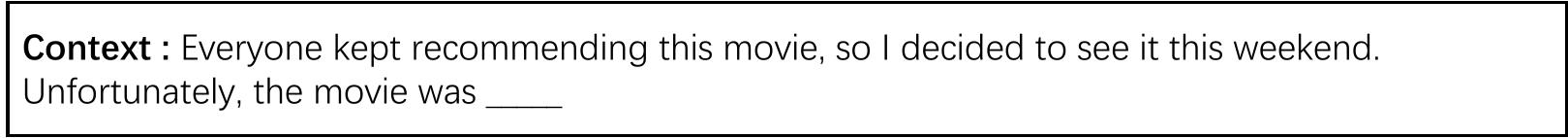}

\vspace{0cm} % 添加间距

% 两个子图排列
\begin{subfigure}{0.48\columnwidth}
    \centering
    \includegraphics[width=\textwidth]{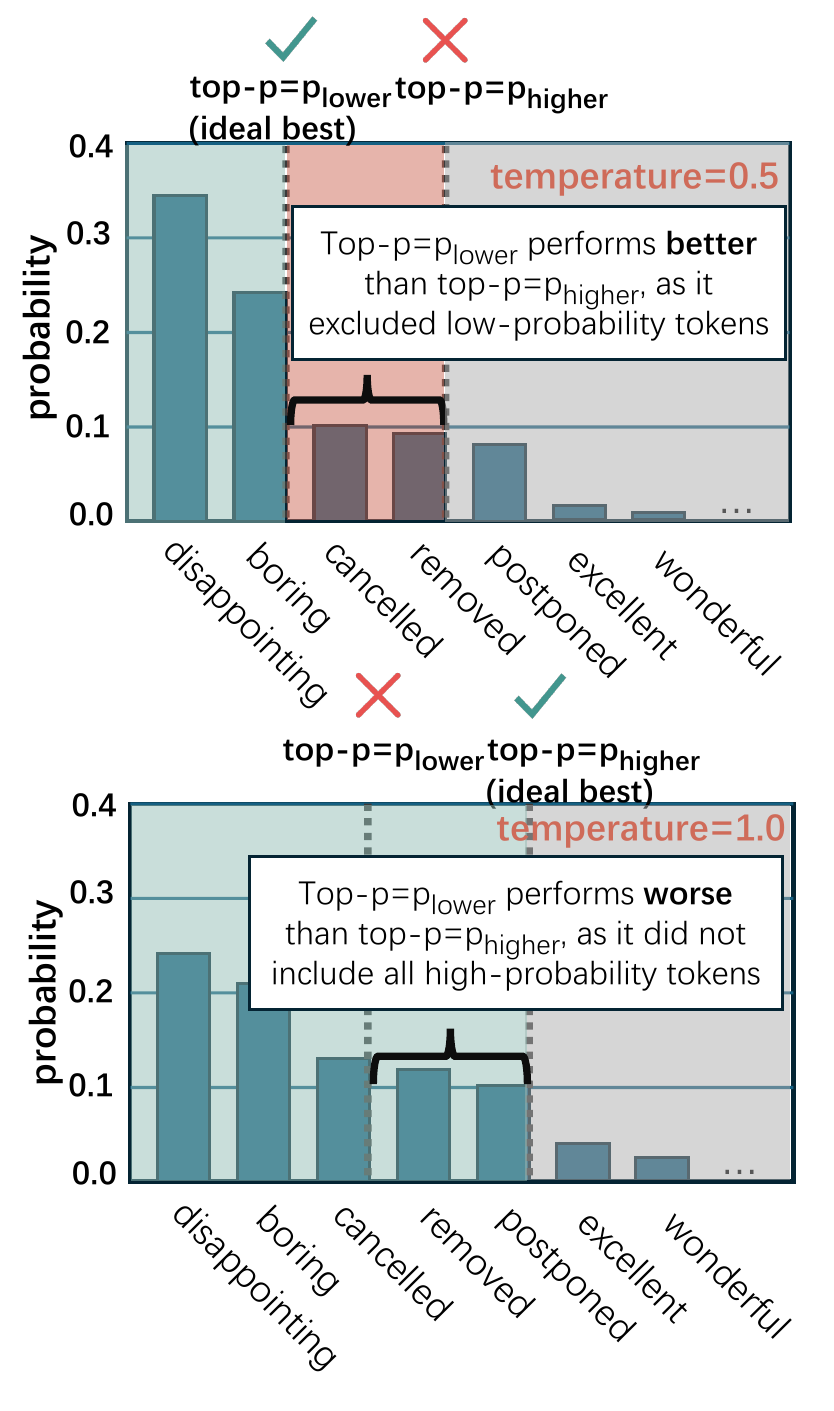}
    \vspace{-0.8cm}
    \caption{top-p sampling at $\tau$=0.5 and $\tau$=1.0}
\end{subfigure}
\hspace{0.01\columnwidth}%
\begin{minipage}[c]{0.01\columnwidth}
    \centering
    \raisebox{1.5cm}[0pt][0pt]{\rule{0.5pt}{5cm}}
\end{minipage}%
\hspace{0.01\columnwidth}%
% \hspace{0.02\columnwidth}
\begin{subfigure}{0.48\columnwidth}
    \centering
    \includegraphics[width=\textwidth]{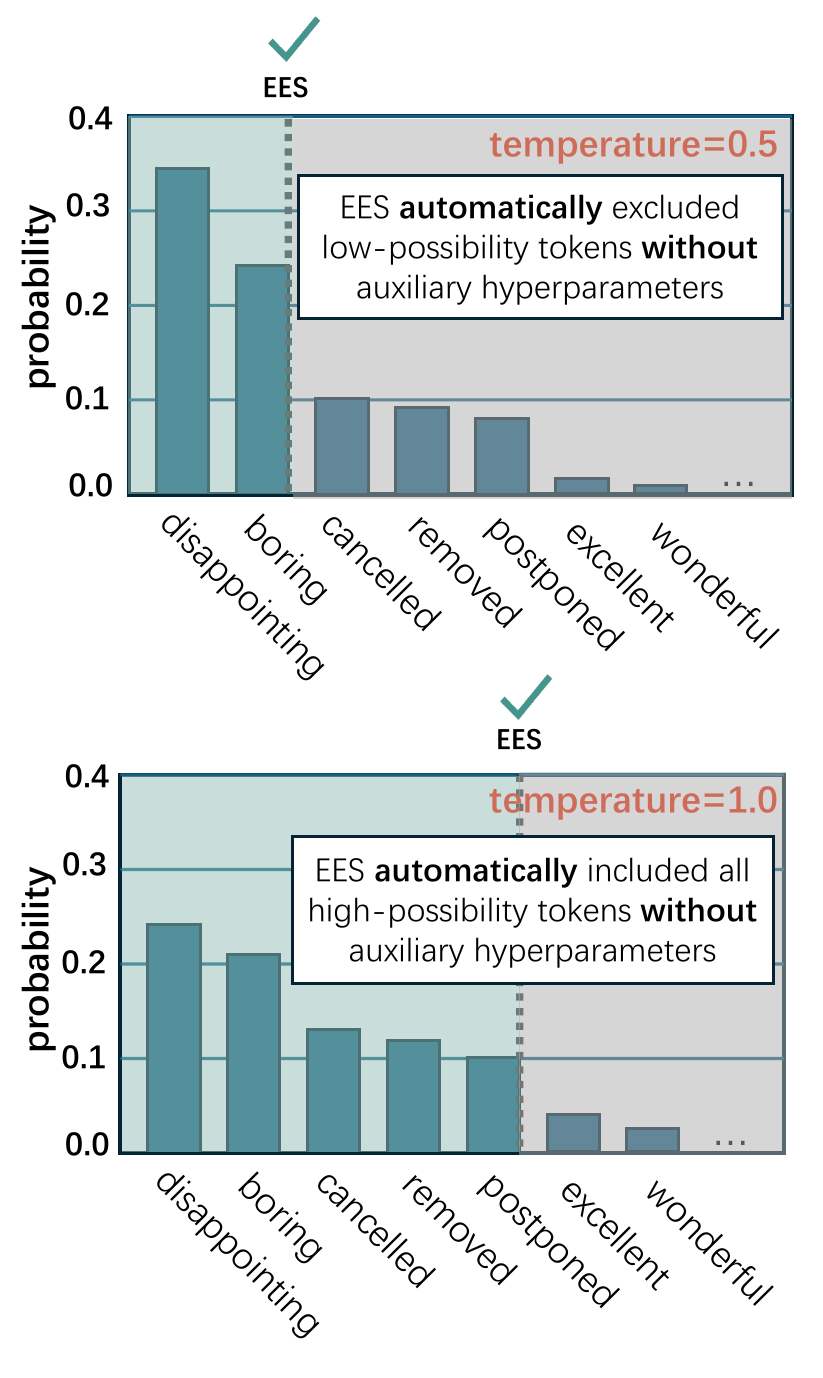}
    \vspace{-0.8cm}
    \caption{EES sampling at $\tau$=0.5 and $\tau$=1.0}
\end{subfigure}

\vspace{0cm}

% Response图片放在最下面
\includegraphics[width=\columnwidth]{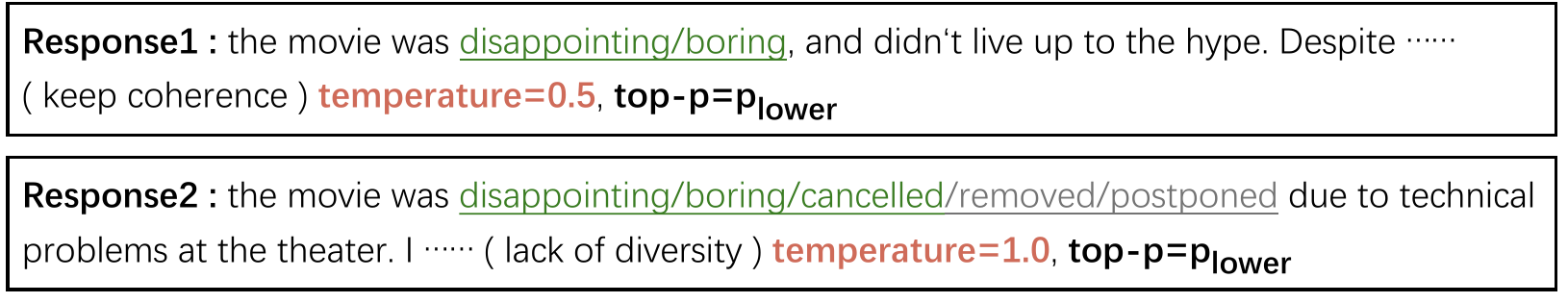}

\caption{Hyperparameter sensitivity across temperatures. EES achieves consistent optimal performance without tuning, while top-p requires temperature-specific hyperparameter adjustment.}
\label{fig:temperature_robustness}
\end{figure}

However, the reliance on auxiliary hyperparameters presents significant challenges for practical deployment~\cite{liao2022empirical}. While existing approaches can achieve optimal performance through careful hyperparameter tuning at specific temperature values, practical applications require methods that maintain consistent quality across varying temperature conditions~\cite{ackley1985learning} without the need for condition-specific parameter adjustments~\cite{yang2022tensor,liu2024large}. 

% A deeper analysis reveals a fundamental issue: most hyperparameters in current stochastic sampling methods share the same underlying objective as temperature—controlling the coherence-diversity trade-off. This conceptual overlap results in unnecessary parameter redundancy and complicates the optimization process, making it difficult for practitioners to achieve consistent performance across different domains and applications.

% \begin{figure}[htbp]
% \centering
% \includegraphics[width=0.5\textwidth]{figures/a.pdf}
% \caption{a}
% \label{fig:hyperparameter_analysis}
% \end{figure}

To address these fundamental limitations, we introduce \textit{Entropy Equilibrium Sampling (EES)}, a novel, streamlined, auxiliary hyperparameter-free approach grounded in information theory~\cite{cover1999elements}, which maintains high generation quality while simplifying deployment. Our method leverages the insight that optimal sampling can be achieved by balancing normalized entropy and probability mass within dynamically constructed candidate sets.

Figure~\ref{fig:temperature_robustness} demonstrates the critical challenge of maintaining optimal performance when temperature conditions change. The result reveals that methods like top-p achieve optimal token selection at specific temperatures (e.g., $\tau=0.5$) through careful hyperparameter tuning but fail to maintain this performance when temperature changes (e.g., $\tau=1.0$) without re-tuning their auxiliary parameters. In contrast, our proposed EES method consistently achieves optimal token truncation across varying temperature conditions without requiring any hyperparameter adjustment. This sensitivity not only complicates practical deployment but also makes it difficult to achieve consistent performance across different domains and applications~\cite{loya2023exploring,liang2022holistic}.

In summary, the contributions of this paper are as follows:
\begin{itemize}
\item \textbf{Theoretical analysis for entropy-probability equilibrium}: We provide rigorous theoretical analysis proving the existence and uniqueness of the equilibrium between normalized entropy and probability mass, establishing a solid mathematical foundation for auxiliary-hyperparameter-free sampling methods.
\item \textbf{Auxiliary-hyperparameter-free sampling method}: We present a method that dynamically adjusts candidate set size based on entropy-probability mass relationships, maintaining effective coherence-diversity balance without hyperparameter tuning.
\item \textbf{Comprehensive empirical validation}: We conducted extensive experiments across diverse benchmarks, model families, and scales, demonstrating that our method effectively balances coherence and diversity while maintaining robustness across different temperature settings.
\end{itemize}
\section{Related Work}
\subsection{Deterministic Methods}
Deterministic sampling methods provide reproducible outputs and strong coherence, but suffer from limited diversity due to their conservative selection strategies. Greedy sampling selects the highest probability token at each step~\cite{holtzman2019curious}. Beam search~\cite{freitag2017beam} maintains multiple candidate sequences but still tends to generate bland text due to its preference for high-probability paths~\cite{vijayakumar2016diverse}. Contrastive Search~\cite{su2022contrastive} balances model confidence with lexical diversity penalties, while Frustratingly Simple sampling~\cite{yang2023frustratingly} uses auxiliary anti-language models to guide generation. DoLa~\cite{chuang2023dola} leverages layer-wise representational differences in transformers for improved sampling.

\subsection{Stochastic Methods}

Stochastic sampling methods aim to identify an appropriate truncation point to construct a candidate set from the original vocabulary, followed by probabilistic sampling within this subset. The widely adopted top-k~\cite{fan2018hierarchical} and top-p~\cite{holtzman2019curious} sampling methods employ fixed truncation criteria based on candidate set size and probability mass, respectively. Recent methods introduce adaptive sampling strategies that dynamically adjust candidate sets based on distribution characteristics. $\eta$-sampling~\cite{hewitt2022truncation} uses entropy thresholds to control candidate set size based on prediction uncertainty. Mirostat sampling~\cite{basu2020mirostat} maintains text perplexity within a predetermined range by adaptively selecting tokens based on the current generation's perplexity. Typical sampling~\cite{meister2023locally} ranks vocabulary tokens by the divergence between their individual probabilities and the distribution's entropy, then applies a coverage threshold to determine the candidate set. Adaptive sampling~\cite{zhu2024improving} uses entropy-based confidence metrics to determine token inclusion in the candidate set during generation. Min-$p$~\cite{nguyen2024turning} dynamically adjusts sampling thresholds by utilizing the top token's probability as a scaling factor.

\section{Preliminaries}

\subsection{Autoregressive Sampling}

In autoregressive text generation, given a context sequence $x_{1:t-1}$, a LLM computes a score vector $\mathbf{s} \in \mathbb{R}^{|\mathcal{V}|}$ where each element $s_i$ represents the unnormalized logit for the $i$-th vocabulary token in vocabulary $\mathcal{V}$~\cite{vaswani2017attention,radford2019language}. This score vector undergoes temperature scaling with hyperparameter $\tau > 0$, followed by softmax normalization to obtain the probability distribution:
\begin{equation}
P(x_t = v_i | x_{1:t-1}) = \frac{\exp(s_i/\tau)}{\sum_{j=1}^{|\mathcal{V}|} \exp(s_j/\tau)},
\label{eq:softmax}
\end{equation}
where different sampling methods then employ various strategies to select tokens from this distribution, balancing between generation coherence and diversity~\cite{fan2018hierarchical,holtzman2019curious}.

\subsection{Normalized Entropy}
Normalized entropy measures the degree of uncertainty in probability distributions relative to the maximum possible uncertainty. Given the top-$k$ tokens sorted by descending probability, the normalized entropy is defined as:
\begin{equation}
\bar{H}_k := \frac{H_k}{\log k},
\label{eq:normalized_entropy}
\end{equation}
where $H_k$ is the Shannon entropy~\cite{shannon1948mathematical} of the top-$k$ subset:
\begin{equation}
H_k = -\sum_{i=1}^{k} \hat{p}_i \log \hat{p}_i,
\label{eq:shannon_entropy}
\end{equation}
and $\hat{p}_i = \frac{p_i}{\sum_{j=1}^{k}p_j}$ represents the renormalized probability of the $i$-th token within the top-$k$ subset.

The denominator $\log k$ represents the maximum entropy achievable when all $k$ tokens have equal probability ($\hat{p}_i = \frac{1}{k}$ for all $i$). Thus, $\bar{H}_k \in [0, 1]$, where:

\begin{itemize}
    \item $\bar{H}_k \to 0$: The distribution is highly concentrated, indicating low uncertainty and high confidence in token selection.
    \item $\bar{H}_k \to 1$: The distribution approaches uniform, indicating high uncertainty and approximately equal likelihood among tokens.
\end{itemize}

This normalized entropy measure provides a scale-invariant metric for assessing distributional uncertainty within candidate token sets, enabling consistent comparison across different vocabulary subset sizes.

\subsection{Probability Mass}
Probability mass measures the cumulative probability concentration within a token subset. Given the top-$k$ tokens ranked by descending probability, probability mass is defined as:
\begin{equation}
P_k := \sum_{i=1}^{k} p_i,
\label{eq:probability_mass}
\end{equation}
in contrast to entropy, which captures the shape of the distribution, probability mass directly reflects candidate subset coverage, making it suitable for truncation thresholds.

\section{Entropy Equilibrium Sampling}

\subsection{Problem Formulation}
In LLM sampling, the fundamental challenge lies in balancing exploration and exploitation within the vocabulary probability distribution~\cite{meister2021language,brown2024large}. This balance is characterized by two key information-theoretic measures: Normalized Entropy, which quantifies the uniformity of the probability distribution within the candidate set, and Probability Mass, which represents the cumulative probability coverage of selected candidates.

We formalize this as an optimization problem over vocabulary subsets. Given a conditional probability distribution $P(x_t | x_{<t})$ at generation step $t$, we seek a candidate set $A^* \subseteq \mathcal{V}$ that optimizes the balance between these two measures:
\begin{equation}
A^*_{x_{<t}} = \arg\max_{A^*_{x_{<t}} \subseteq \mathcal{V}} \mathcal{F}(\bar{H}_k(A^*_{x_{<t}}), \text{P}(A^*_{x_{<t}})),
\label{eq:optimization}
\end{equation}
where $\mathcal{F}$ represents the objective function balancing coherence-diversity trade-offs. We propose that an effective balance can be achieved when normalized entropy and probability mass are in equilibrium, suggesting that $\mathcal{F}$ should consider candidate sets where these measures are aligned.

\subsection{Algorithm Overview}
To implement the equilibrium-based approach outlined in Section 4.1, we instantiate the objective function $\mathcal{F}$ by seeking candidate sets where normalized entropy $\bar{H}_k$ equals the probability mass $P_k$. This alignment between the two measures provides a concrete criterion for identifying effective candidate sets.

Our proposed sampling method identifies the truncation point where this balance is achieved, thereby determining the candidate set. Since candidate set size selection is inherently discrete, we formulate this as finding the optimal threshold $k^*$:
\begin{equation}
k^* = \arg\max_{k}\left\{k: \bar{H}_k \geq P_k\right\}.
\label{eq:threshold}
\end{equation}
Once $k^*$ is determined, we truncate the vocabulary probability distribution at this threshold, retaining only the top-$k^*$ tokens as our candidate set for subsequent sampling.

\subsection{Convergence Analysis}

We establish the theoretical foundation for our algorithm by proving the existence and uniqueness of the optimal threshold $k^*$.

\begin{theorem}
For any probability distribution $\{p_i\}_{i=1}^n$ sorted in descending order ($p_1 \geq p_2 \geq \cdots \geq p_n > 0$), there exists a unique $k^* \in \{1, 2, \ldots, n\}$ such that the algorithm converges.
\end{theorem}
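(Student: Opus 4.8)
The plan is to reduce the statement to a property of the feasibility set $S := \{k \in \{1,\dots,n\} : \bar{H}_k \ge P_k\}$. Since the criterion in \eqref{eq:threshold} outputs $k^* = \arg\max_k\{k : \bar{H}_k \ge P_k\} = \max S$, it suffices to prove (i) $S \neq \emptyset$, which gives existence of the maximizer, and (ii) that the maximum of a finite nonempty subset of $\mathbb{N}$ is unique, which gives uniqueness (well-definedness) of the output. Part (ii) is a one-line remark, so essentially all the content sits in part (i), and within (i) the only delicate point is the left endpoint $k=1$.

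I would treat $k=1$ explicitly: there $P_1 = p_1 \in (0,1]$ while $\hat p_1 = p_1/P_1 = 1$ forces $H_1 = 0$, so $\bar{H}_1 = H_1/\log 1$ is the indeterminate form $0/0$. The convention consistent both with the limit and with the intended reading of $\bar{H}_k$ as ``how close the top-$k$ block is to uniform'' is $\bar{H}_1 := 1$, since a distribution on a single token is trivially uniform on its support (equivalently, the sampler always retains at least the top token). Under this convention $\bar{H}_1 = 1 \ge p_1 = P_1$, with equality iff $p_1 = 1$, so $1 \in S$; hence $S \neq \emptyset$, $k^* = \max S$ exists, and together with (ii) the theorem follows. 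This boundary case is load-bearing: for a sharply peaked distribution --- e.g.\ $n=2$, $p = (0.9,0.1)$, where $\bar{H}_2 = h_b(0.1)/\log 2 \approx 0.47 < 1 = P_2$ with $h_b$ the binary entropy --- the set $\{k \ge 2: \bar{H}_k \ge P_k\}$ is already empty, so without fixing $\bar{H}_1$ the existence claim fails.

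For context I would also record the structure that makes the criterion natural: $P_k$ is strictly increasing in $k$ (all $p_i>0$) from $p_1$ up to $P_n=1$; $\bar{H}_k \in [0,1]$ with $\bar{H}_k = 1$ iff $p_1 = \dots = p_k$; and there is a splitting identity $H_{k+1} = (1-\lambda_k)H_k + h_b(\lambda_k)$ with $\lambda_k := p_{k+1}/P_{k+1}$, which expresses $\bar{H}_{k+1}$ explicitly in terms of $\bar{H}_k$. I would, however, not attempt to strengthen (i) to a ``single-crossing'' claim (that $S$ is an initial segment $\{1,\dots,k^*\}$): $\bar{H}_k$ need not be monotone --- e.g.\ $p = (0.4,0.2,0.2,0.2)$ gives $\bar{H}_2 \approx 0.92 < \bar{H}_3 \approx 0.95$ --- so $S$ can be a union of separated blocks, and the honest content of the theorem is exactly (i)+(ii). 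The main obstacle is therefore conceptual rather than computational: pinning down the $k=1$ convention $\bar{H}_1 := 1$, on which the existence half rests entirely; once that is fixed, everything else is immediate. If one instead insisted on a crossing/monotonicity statement, the hard part would be bounding $\bar{H}_{k+1}-\bar{H}_k$ against $p_{k+1}$ via the splitting identity, and the examples above show that route cannot succeed in full generality.
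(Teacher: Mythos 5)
Your argument is correct, but it takes a genuinely different — and in fact more defensible — route than the paper. The paper's proof proceeds in three steps: Step 1 claims that $f(k)=\bar{H}_k-P_k$ is strictly decreasing (arguing via the same splitting identity $H_{k+1}=\alpha H_k+h(\alpha)$ that you record), Step 2 gives the boundary values $f(1)=1-p_1\ge 0$ and $f(n)\le 0$, and Step 3 concludes existence and uniqueness of the last index with $f(k)\ge 0$ together with the crossing structure $f(k^*+1)<0$. You use only your parts (i)+(ii): $1\in S$ under the convention $\bar{H}_1:=1$ (the same convention the paper adopts) plus finiteness of $S$, which indeed suffices for well-definedness of $k^*=\max S$ in \eqref{eq:threshold}. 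What the paper's stronger Step 1 would buy is exactly what you decline to claim: that $S$ is an initial segment, which is what licenses Algorithm~\ref{alg:ees}'s early termination at the first violation as computing the $\arg\max$ in \eqref{eq:threshold}. But your instinct to avoid monotonicity is vindicated: your example $p=(0.4,0.2,0.2,0.2)$ actually refutes the paper's claim \eqref{eq:entropy_inequality} that $\bar{H}_k$ is non-increasing ($\bar{H}_2\approx 0.918<\bar{H}_3\approx 0.946$), and the precise flaw in the paper's argument is the assertion $\phi\!\left(\tfrac{k}{k+1}\right)=0$: in general $\phi\!\left(\tfrac{k}{k+1}\right)=(H_k-\log k)\left(\log(k+1)-\tfrac{k}{k+1}\log k\right)\le 0$, with equality only when the top-$k$ block is uniform, so positivity of $\phi'$ on $\left(\tfrac{k}{k+1},1\right)$ does not yield \eqref{eq:rearranged}. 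Even the weaker single-crossing of signs fails: for $p=(0.45,0.05,\dots,0.05)$ with eleven tail entries of $0.05$ one gets $f(2)\approx-0.031$, $f(3)\approx-0.004$, but $f(4)\approx+0.004$, so $S$ really can be disconnected; note that your own example does not witness this ($S=\{1,2,3\}$ there), so your inference from non-monotone $\bar{H}_k$ to disconnected $S$ was a leap, though a correct one. In short, your minimal argument proves exactly what is provable — existence and uniqueness of $\max S$ — whereas the paper's proof of the stronger monotonicity claim is unsound, and the gap matters for the asserted equivalence between \eqref{eq:threshold} and the implemented first-violation stopping rule.
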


\begin{proof}
We define the objective function:
\begin{equation}
f(k) = \bar{H}_k - P_k,
\label{eq:objective_function}
\end{equation}
where the algorithm seeks the largest $k^*$ such that $f(k^*) \geq 0$. The proof consists of three steps.

\textbf{Step 1: Monotonicity of the objective function.}

We establish that $f(k) = \bar{H}_k - P_k$ is strictly decreasing. Let $g(k) = \bar{H}_k$ denote the normalized entropy. We demonstrate that $g(k+1) \leq g(k)$ for all valid $k$. For the boundary case $k=1$, we set $g(1) = 1$ and observe that $g(1) = 1 \geq g(2) = \frac{H_2}{\log 2}$ since $H_2 \leq \log 2$.

For $k \geq 2$, we prove:
\begin{equation}
\frac{H_{k+1}}{\log(k+1)} \leq \frac{H_k}{\log k}.
\label{eq:entropy_inequality}
\end{equation}

This is equivalent to:
\begin{equation}
H_{k+1} \cdot \log k \leq H_k \cdot \log(k+1).
\label{eq:main}
\end{equation}

We decompose $H_{k+1}$ as:
\begin{align}
H_{k+1} &= -\sum_{i=1}^{k} \frac{p_i}{P_{k+1}} \log \frac{p_i}{P_{k+1}} - \frac{p_{k+1}}{P_{k+1}} \log \frac{p_{k+1}}{P_{k+1}} \nonumber \\
&= \frac{P_k}{P_{k+1}} H_k + \frac{P_k}{P_{k+1}} \log \frac{P_{k+1}}{P_k} \nonumber \\
&\quad - \left(1-\frac{P_k}{P_{k+1}}\right) \log \left(1-\frac{P_k}{P_{k+1}}\right).
\label{eq:entropy_decomp}
\end{align}

Defining $\alpha = \frac{P_k}{P_{k+1}}$, we obtain:
\begin{equation}
H_{k+1} = \alpha H_k + h(\alpha),
\label{eq:decomp}
\end{equation}
where $h(\alpha) = -\alpha \log \alpha - (1-\alpha) \log(1-\alpha)$ is the binary entropy function.

Since $\{p_i\}_{i=1}^n$ is sorted in descending order, we have $\alpha = \frac{P_k}{P_{k+1}} \geq \frac{k}{k+1}$. 

When $\alpha = \frac{k}{k+1}$ (uniform case), we have $p_1 = p_2 = \cdots = p_{k+1}$, yielding $H_k = \log k$ and $H_{k+1} = \log(k+1)$, so inequality~\eqref{eq:main} holds with equality.

When $\alpha > \frac{k}{k+1}$, substituting eq.~\eqref{eq:decomp} into eq.~\eqref{eq:main} and rearranging:
\begin{equation}
H_k \cdot \left( \log(k+1) - \alpha \log k \right) \geq \log k \cdot h(\alpha).
\label{eq:rearranged}
\end{equation}

Define $\phi(\alpha) = H_k \cdot(\log(k+1) - \alpha \log k) - \log k \cdot h(\alpha)$. Taking the derivative:
\begin{equation}
\phi'(\alpha) = -\log k \cdot \left(H_k + \log\frac{1-\alpha}{\alpha}\right).
\label{eq:derivative}
\end{equation}

For $\alpha > \frac{k}{k+1}$, we have $\frac{1-\alpha}{\alpha} < \frac{1}{k}$, implying $\log\left(\frac{1-\alpha}{\alpha}\right) < -\log k$. Therefore:
\begin{equation}
\phi'(\alpha) > -\log k \cdot (H_k - \log k) \geq 0,
\label{eq:derivative_bound}
\end{equation}
where the inequality follows from $H_k \leq \log k$.

Since $\phi\left(\frac{k}{k+1}\right) = 0$ and $\phi'(\alpha) > 0$ for $\alpha \in \left(\frac{k}{k+1}, 1\right)$, we have $\phi(\alpha) > 0$ in this interval, establishing inequality~\eqref{eq:rearranged}.

Therefore, $g(k)$ is strictly decreasing. Since $P_k$ is strictly increasing, $f(k) = g(k) - P_k$ is strictly decreasing.

\textbf{Step 2: Boundary behavior analysis.}

We examine the behavior of $f(k)$ at the boundaries to establish the existence of a zero crossing.

For $k=1$: Since we define $g(1) = 1$ and $P_1 = p_1 \leq 1$, we have $f(1) = 1 - p_1 \geq 0$.

As $k$ approaches the vocabulary size $n$: We have $P_n = 1$ and $\bar{H}_n \leq 1$, with strict inequality for non-uniform distributions due to entropy concavity. Thus:
\begin{equation}
\lim_{k \to n} f(k) = \bar{H}_n - 1 \leq 0.
\label{eq:boundary_limit}
\end{equation}

\textbf{Step 3: Existence and uniqueness.}

Since $f(k)$ is strictly decreasing on the discrete domain $\{1, 2, \ldots, n\}$, with $f(1) \geq 0$ and $f(n) \leq 0$, there exists a unique optimal threshold $k^*$ defined as:
\begin{equation}
k^* = \arg\max_k \{k : f(k) \geq 0\}.
\label{eq:unique_threshold}
\end{equation}

The strict monotonicity of $f(k)$ ensures that:
\begin{itemize}
    \item If $k^* < n$, then $f(k^*) \geq 0$ and $f(k^* + 1) < 0$
    \item If $k^* = n$, then $f(n) \geq 0$ (uniform distribution case)
\end{itemize}

This guarantees the uniqueness of the optimal threshold and ensures our algorithm converges to this well-defined solution.
\end{proof}

\subsection{Algorithm Implementation}

% Algorithm~\ref{alg:ees} presents the complete algorithmic procedure for EES, which dynamically selects the optimal sampling threshold at each generation step by maintaining an entropy-probability equilibrium condition. Figure~\ref{fig:hyperparameter_analysis} illustrates the mechanism, showing how the algorithm determines the optimal threshold $k^*$ through iterative comparison of normalized entropy and probability mass.
Figure~\ref{fig:hyperparameter_analysis} illustrates the mechanism of EES, showing how the algorithm determines the largest $k$ value as the optimal threshold $k^*$ where normalized entropy still exceeds or equals probability mass. Algorithm~\ref{alg:ees} presents the complete algorithmic procedure for EES, which dynamically selects the optimal sampling threshold at each generation step by maintaining a normalized entropy-probability mass equilibrium condition. The algorithm incrementally expands the candidate set while monitoring the condition $\bar{H}_k \geq P_k$, terminating when this equilibrium is first violated to identify the optimal threshold $k^*$. This approach automatically adapts to varying model confidence levels without requiring auxiliary hyperparameters.

\begin{figure}[H]
\centering
\includegraphics[width=0.40\textwidth]{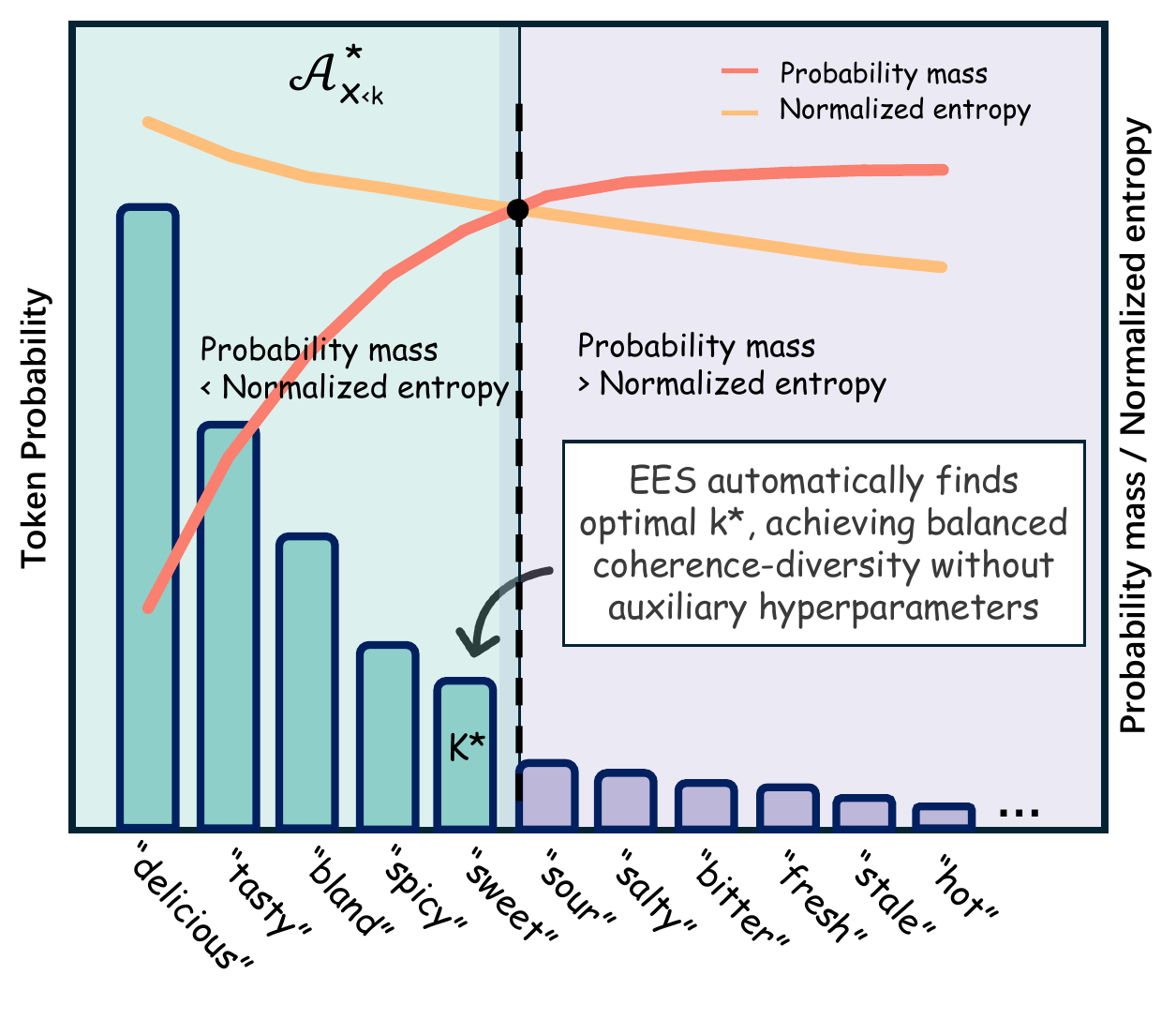}
\vspace{-0.5cm} 
\caption{Mechanism of EES}
\label{fig:hyperparameter_analysis}
\end{figure}

\textbf{Complexity Analysis.} The computational complexity per generation step consists of: (1) probability computation: $\mathcal{O}(|\mathcal{V}|)$, (2) sorting: $\mathcal{O}(|\mathcal{V}|\log|\mathcal{V}|)$, (3) threshold selection: $\mathcal{O}(k^2)$ for entropy computation across iterations, and (4) sampling: $\mathcal{O}(k)$. Since $k \ll |\mathcal{V}|$ in practice, the overall complexity is $\mathcal{O}(|\mathcal{V}|\log|\mathcal{V}|)$ per generation step.

\begin{algorithm}
\caption{Entropy Equilibrium Sampling (EES)}
\label{alg:ees}
\begin{algorithmic}[1]
\REQUIRE Prefix $\mathcal{C} = \{x_1, \ldots, x_m\}$; Language Model $P_\theta$; Vocabulary size $|\mathcal{V}|$
\ENSURE Continuation $\mathcal{G} = \{x_{m+1}, x_{m+2}, \ldots, x_{m+n}\}$

\STATE $\mathcal{G} \leftarrow \emptyset$
\FOR{each generation step}
    % \STATE $\mathbf{p} \leftarrow $
    \STATE $\mathbf{p} \leftarrow \text{sort}(P_\theta(x_t | \mathcal{C}), \text{descending})$
    \STATE $\mathcal{A} \leftarrow \emptyset$, $P_0 \leftarrow 0$ 
    
    \FOR{$k = 1$ to $|\mathcal{V}|$}
        % \STATE $\mathcal{A} \leftarrow \mathcal{A} \cup \{p_k\}$
        \STATE $P_k \leftarrow P_{k-1} + p_k$, $\mathcal{A} \leftarrow \mathcal{A} \cup \{p_k\}$
        \IF{$k \geq 2$}
            \STATE $\mathbf{q} \leftarrow \{p_i / P_k : p_i \in \mathcal{A}\}$ 
            % \STATE $H_k \leftarrow -\sum_{q \in \mathbf{q}} q \log q$
            \STATE $\bar{H}_k \leftarrow \frac{-\sum_{q \in \mathbf{q}} q \log q}{\log k}$
            \IF{$\bar{H}_k < P_k$}
                \STATE $k^* \leftarrow k-1$, \textbf{break}
            \ENDIF
        \ENDIF
    \ENDFOR
    \STATE $x_t \leftarrow \text{sample from top-}k^* \text{ distribution}$
    \STATE Update $\mathcal{C}$ and $\mathcal{G}$ with $x_t$
\ENDFOR

\RETURN $\mathcal{G}$
\end{algorithmic}
\end{algorithm}
\section{Experiments}
\subsection{Setups}

\begin{table*}[tb]
\centering
\renewcommand{\arraystretch}{0.9} % 设置行间距为0.9倍
\resizebox{\textwidth}{!}{
\scriptsize
\begin{tabular}{cc@{\hspace{0.6em}}l@{\hspace{0.6em}}l@{\hspace{0.6em}}l@{\hspace{1.0em}}l@{\hspace{1.8em}}l@{\hspace{0.6em}}l@{\hspace{0.6em}}l@{\hspace{1.0em}}l}
\toprule
\multirow{3}{*}{\small\textbf{LM}} & \multirow{3}{*}{\small\textbf{Sampling Method}} & \multicolumn{4}{c}{\footnotesize\textbf{CommonsenseQA}} & \multicolumn{4}{c}{\footnotesize\textbf{StrategyQA}} \\
\cmidrule(lr){3-6} \cmidrule(lr){7-10}
& & $\tau=0.5$ & $\tau=0.8$ & $\tau=1.0$ & \textbf{Avg} & $\tau=0.5$ & $\tau=0.8$ & $\tau=1.0$ & \textbf{Avg} \\
\midrule
\multirow{8}{*}{\rotatebox{90}{\small\textbf{Qwen2.5-7B}}} 
& Temperature & 83.70 & 82.39 & 82.01 & 82.70 & 75.75 & 75.90 & 74.67 & 75.44 \\
& Top-p & 83.77$_{(0.75)}$ & 82.64$_{(0.80)}$ & 81.93$_{(0.80)}$ & 82.78 & 74.79$_{(0.90)}$ & 75.87$_{(0.85)}$ & 75.05$_{(0.80)}$ & 75.24 \\
& Top-k & 82.75$_{(10)}$ & 82.42$_{(100)}$ & 82.65$_{(20)}$ & 82.61 & \underline{75.87}$_{(5)}$ & 75.11$_{(5)}$ & 74.44$_{(100)}$ & 75.14 \\
& Eta & 82.98$_{(0.0009)}$ & 81.88$_{(0.002)}$ & 82.10$_{(0.0006)}$ & 82.32 & 75.43$_{(0.0006)}$ & 74.99$_{(0.0006)}$ & 75.11$_{(0.002)}$ & 75.18 \\
& Mirostat & \underline{83.88}$_{(2.5)}$ & 83.23$_{(2.5)}$ & \underline{83.55}$_{(2.5)}$ & \underline{83.55} & 75.72$_{(2.5)}$ & \underline{76.33}$_{(3.5)}$ & 75.37$_{(2.5)}$ & \underline{75.81} \\
& Typical & 83.36$_{(0.95)}$ & \textbf{84.11}$_{(0.2)}$ & 82.95$_{(0.2)}$ & 83.47 & 75.34$_{(0.2)}$ & 75.57$_{(0.2)}$ & 75.23$_{(0.2)}$ & 75.38 \\
& Adaptive & 83.62$_{(0.0005)}$ & 83.46$_{(0.005)}$ & 83.39$_{(0.01)}$ & 83.49 & 75.63$_{(0.005)}$ & 75.52$_{(0.005)}$ & \underline{75.46}$_{(0.005)}$ & 75.54 \\
& \textbf{Ours} & \textbf{84.42} & \underline{83.95} & \textbf{83.64} & \textbf{84.00} & \textbf{76.33} & \textbf{76.56} & \textbf{75.95} & \textbf{76.28} \\
\midrule
\multirow{8}{*}{\rotatebox{90}{\small\textbf{Llama3.1-8B}}} 
& Temperature & 76.69 & 75.71 & 73.63 & 75.34 & 73.80 & 72.34 & 69.02 & 71.72 \\
& Top-p & 76.81$_{(0.75)}$ & 76.25$_{(0.75)}$ & 75.28$_{(0.75)}$ & 76.11 & 73.54$_{(0.95)}$ & 73.54$_{(0.85)}$ & 72.66$_{(0.75)}$ & 73.25 \\
& Top-k & \underline{77.33}$_{(20)}$ & 76.23$_{(5)}$ & 74.76$_{(5)}$ & 76.11 & \textbf{75.28}$_{(5)}$ & 72.20$_{(20)}$ & 72.52$_{(5)}$ & 73.33 \\
& Eta & 77.17$_{(0.0006)}$ & 75.22$_{(0.0003)}$ & 74.25$_{(0.004)}$ & 75.55 & 74.06$_{(0.002)}$ & 71.99$_{(0.0003)}$ & 70.39$_{(0.0009)}$ & 72.15 \\
& Mirostat & 77.20$_{(2.5)}$ & 76.38$_{(3.0)}$ & 76.54$_{(2.5)}$ & 76.71 & 74.12$_{(4.0)}$ & \textbf{75.46}$_{(3.0)}$ & 74.50$_{(4.0)}$ & 74.69 \\
& Typical & 77.22$_{(0.2)}$ & 75.81$_{(0.2)}$ & 75.18$_{(0.2)}$ & 76.07 & 75.05$_{(0.2)}$ & 73.57$_{(0.2)}$ & 72.02$_{(0.90)}$ & 73.55 \\
& Adaptive & 77.17$_{(0.01)}$ & \textbf{77.38}$_{(0.0005)}$ & \textbf{77.20}$_{(0.01)}$ & \textbf{77.25} & 75.11$_{(0.0005)}$ & \underline{75.17}$_{(0.01)}$ & \textbf{75.14}$_{(0.0005)}$ & \textbf{75.14} \\
& \textbf{Ours} & \textbf{77.44} & \underline{76.72} & \underline{76.90} & \underline{77.02} & \underline{75.14} & 74.50 & \underline{74.61} & \underline{74.75} \\
\midrule
\multirow{8}{*}{\rotatebox{90}{\small\textbf{Qwen2.5-32B}}} 
& Temperature & 88.98 & 88.27 & 88.52 & 88.59 & 80.29 & 79.39 & 79.04 & 79.57 \\
& Top-p & 88.63$_{(0.90)}$ & 88.21$_{(0.95)}$ & 88.19$_{(0.85)}$ & 88.34 & \textbf{80.44}$_{(0.80)}$ & 79.42$_{(0.95)}$ & 79.24$_{(0.80)}$ & 79.70 \\
& Top-k & 88.83$_{(10)}$ & \textbf{89.12}$_{(5)}$ & 88.11$_{(50)}$ & 88.69 & 79.77$_{(5)}$ & 80.38$_{(5)}$ & 78.57$_{(100)}$ & 79.57 \\
& Eta & 88.99$_{(0.002)}$ & 88.48$_{(0.002)}$ & 88.47$_{(0.002)}$ & 88.65 & 80.15$_{(0.0009)}$ & 78.95$_{(0.004)}$ & 79.39$_{(0.004)}$ & 79.50 \\
& Mirostat & \underline{89.24}$_{(3.0)}$ & 88.75$_{(4.0)}$ & \textbf{89.09}$_{(2.5)}$ & \textbf{89.03} & 80.32$_{(3.5)}$ & 80.12$_{(2.5)}$ & 80.15$_{(3.5)}$ & 80.20 \\
& Typical & 89.19$_{(0.2)}$ & \underline{88.85}$_{(0.2)}$ & 88.70$_{(0.92)}$ & 88.91 & 80.06$_{(0.90)}$ & 80.32$_{(0.92)}$ & 79.45$_{(0.92)}$ & 79.94 \\
& Adaptive & 88.98$_{(0.0005)}$ & 88.76$_{(0.01)}$ & 88.83$_{(0.001)}$ & 88.86 & 80.32$_{(0.0005)}$ & \textbf{80.79}$_{(0.0005)}$ & \textbf{80.99}$_{(0.01)}$ & \textbf{80.70} \\
& \textbf{Ours} & \textbf{89.30} & \underline{88.85} & \underline{88.94} & \textbf{89.03} & \underline{80.35} & \underline{80.52} & \underline{80.23} & \underline{80.37} \\
\bottomrule
\end{tabular}
}
\caption{Cross-model performance comparison of sampling methods under different temperature settings on two QA datasets. Bold: best performance; underlined: second-best; parentheses: optimal auxiliary hyperparameter.}
\label{tab:sampling_comparison_acc}
\end{table*}

% \paragraph{Models and Datasets} We evaluated our method using three LLMs: Qwen2.5-7B, Qwen2.5-32B~\cite{yang2025qwen3}, and Llama3.1-8B~\cite{dubey2024llama} on CommonsenseQA~\cite{talmor2018commonsenseqa} and StrategyQA~\cite{geva2021did}. These reasoning datasets are designed to evaluate text generation performance, particularly in terms of factual accuracy and coherence. We used the proportion of correctly answered questions as our primary evaluation metric.

\paragraph{Models and Datasets} We evaluated our method using Qwen2.5-7B, Qwen2.5-32B~\cite{yang2025qwen3}, and Llama3.1-8B~\cite{dubey2024llama} on CommonsenseQA~\cite{talmor2018commonsenseqa} and StrategyQA~\cite{geva2021did}. We used the proportion of correctly answered questions as our evaluation metric.

\paragraph{Baseline} We compared against seven established sampling methods: temperature sampling, top-p, top-k sampling, eta sampling, mirostat, typical sampling, and adaptive sampling.

\subsection{Experimental Protocol}
We randomly sampled 20\% of the training data as a validation set to identify optimal hyperparameters for each sampling method across different temperature settings. We configured the hyperparameter search space for each method based on recommendations from the original papers and subsequent literature~\cite{shi2024thorough}:

\begin{itemize}
    \item \textbf{Nucleus (top-p)}: $p \in \{0.75, 0.8, 0.85, 0.9, 0.95\}$
    \item \textbf{Top-k}: $k \in \{5, 10, 20, 50, 100\}$
    \item \textbf{Eta sampling}: $\eta \in \{3 \times 10^{-4}, 6 \times 10^{-4}, 9 \times 10^{-4}, 2 \times 10^{-3}, 4 \times 10^{-3}\}$
    \item \textbf{Mirostat}: $\tau \in \{2.5, 3.0,3.5, 4.0\}$
    \item \textbf{Typical sampling}: $p \in \{0.2, 0.9, 0.92, 0.95\}$
    \item \textbf{Adaptive sampling}: $\epsilon \in \{5 \times 10^{-4}, 1 \times 10^{-3}, 5 \times 10^{-3}, 1 \times 10^{-2}\}$
\end{itemize}

For each configuration, we conducted three independent runs and selected the hyperparameter setting that achieves the highest average accuracy on the validation set. On the test set, we evaluated each method using its optimal hyperparameter with five independent runs, reporting average performance as the final score. Following~\cite{shi2024thorough}, we constructed model inputs by randomly sampling multiple examples from the training set to provide in-context learning examples.
% For each configuration, we conducted three independent runs and selected the hyperparameter setting that achieves the highest average accuracy on the validation set. On the test set, we evaluated each method using its optimal hyperparameter with five independent runs, reporting average performance as the final score. Following~\cite{shi2024thorough}, we constructed model inputs by randomly sampling multiple examples from the training set to provide in-context learning examples, which help generate more structured and consistent outputs.

\subsection{Results}

\begin{figure*}[t]
    \centering
    \begin{subfigure}{0.48\textwidth}
        \centering
        \includegraphics[width=\textwidth]{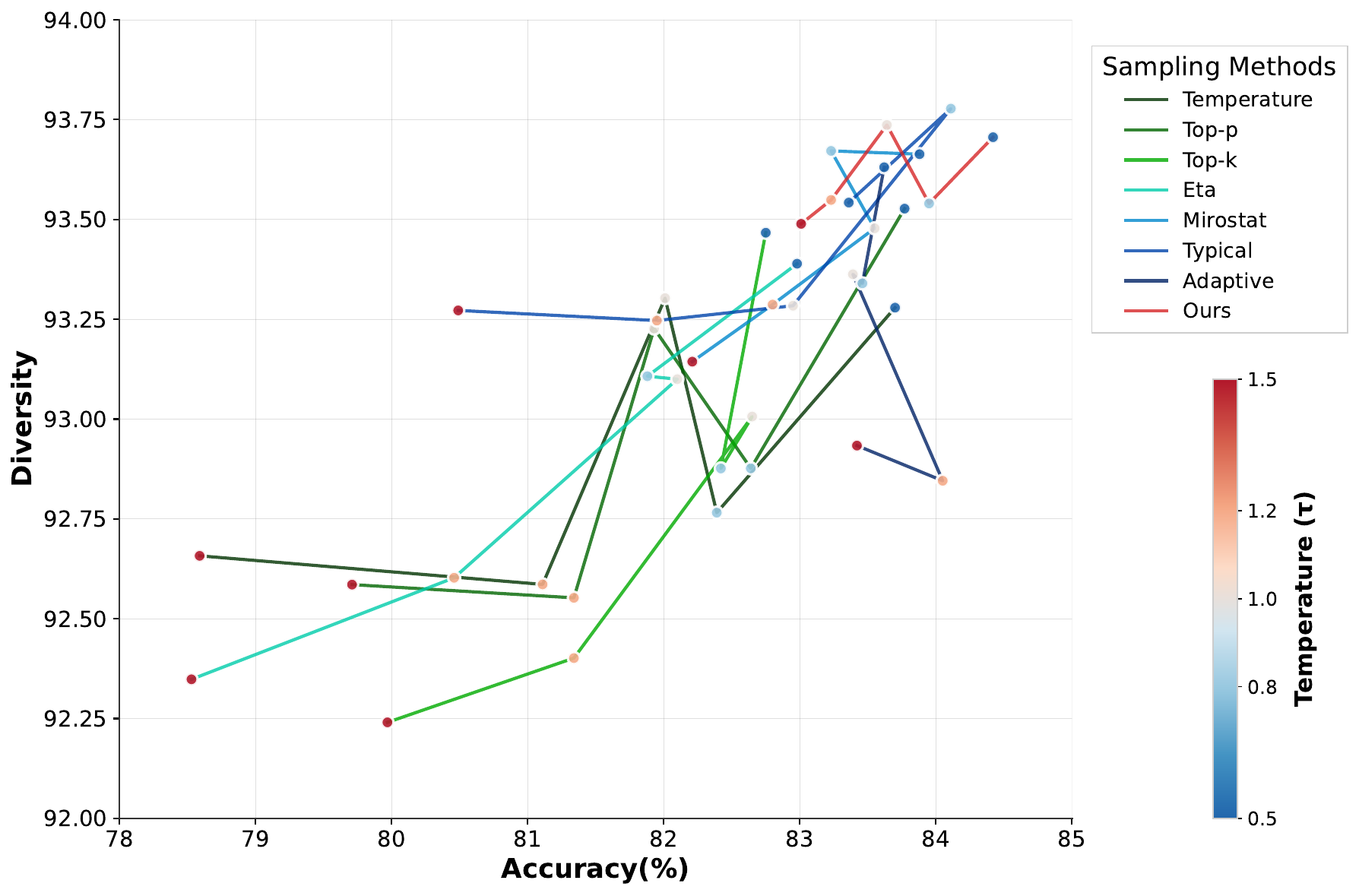}
        \caption{CommonsenseQA}
        \label{fig:commonsenseqa_results}
    \end{subfigure}
    \hfill
    \begin{subfigure}{0.48\textwidth}
        \centering
        \includegraphics[width=\textwidth]{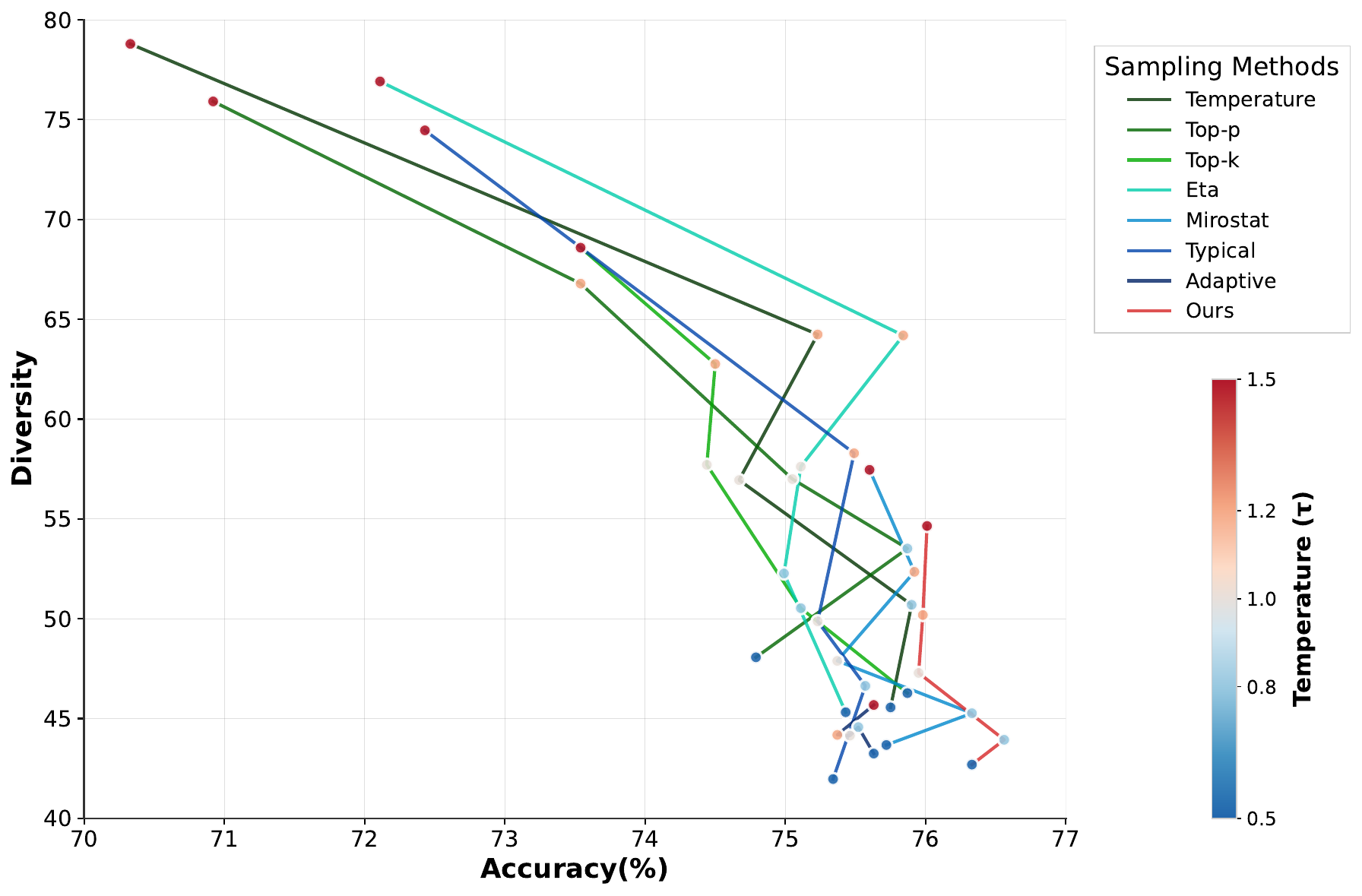}
        \caption{StrategyQA}
        \label{fig:strategyqa_results}
    \end{subfigure}
    % \caption{Accuracy-diversity performance across sampling methods and temperatures on two QA datasets using Llama3.1-8B. EES method is shown in red line, baseline methods in blue-to-green lines. Temperature settings range from $\tau=0.5$ (blue points) to $\tau=1.5$ (red points).}
    \caption{Accuracy-diversity performance across sampling methods and temperatures on two QA datasets using Llama3.1-8B.}
    \label{fig:accuracy_diversity_tradeoff}
\end{figure*}

Table~\ref{tab:sampling_comparison_acc} reveals that our EES method consistently achieves competitive performance, ranking among the top methods across all experimental configurations while demonstrating robust performance across varying temperature settings and model architectures. While some baseline methods occasionally achieve slightly higher individual scores (e.g., Adaptive sampling on Llama3.1-8B), our approach offers a distinct advantage in practical deployment scenarios.

Most significantly, our method eliminates auxiliary hyperparameter sensitivity: baseline methods require diverse optimal configurations across different models and temperatures, with parameters varying dramatically between settings (e.g., top-k from 5 to 100, Eta from 0.0003 to 0.004), whereas our approach maintains consistent performance without any hyperparameter adjustment. Notably, we observed a clear scaling effect where performance variance between methods decreases substantially as model size increases—accuracy differences compress from 2-3\% gaps in smaller models to $<$1\% in the 32B parameter regime, suggesting improved calibration in larger models.

\subsection{Diversity analysis}
\label{subsec:diversity_analysis}

We evaluated diversity using the repetition-based metric $Diversity = \prod_{n=2}^{4}(1.0 - \frac{\text{rep-}n}{100})$ from~\cite{meister2023locally} on correctly classified responses from Qwen2.5-7B. Figure~\ref{fig:accuracy_diversity_tradeoff} demonstrates that our method achieves competitive diversity while maintaining superior accuracy across all temperature settings. Notably, we observed that the presumed accuracy-diversity trade-off is highly task-dependent: on CommonsenseQA (Figure~\ref{fig:commonsenseqa_results}), all sampling methods reach high diversity scores ($>$93\%) at $\tau=0.5$, with higher temperatures providing negligible diversity gains while significantly degrading accuracy. This suggests that task complexity fundamentally modulates the optimal operating point for sampling strategies, and that the traditional assumption of a universal accuracy-diversity trade-off may not hold across different reasoning tasks.

\subsection{Creative Writing}

We evaluated open-ended text generation performance on WikiText-103~\cite{merity2016pointer} using Qwen2.5-7B and Llama3.1-8B models at elevated temperature settings $\tau \in \{1.0, 1.2, 1.5\}$ to promote diversity. We constructed a validation set of 200 randomly sampled training instances for hyperparameter optimization via MAUVE maximization~\cite{pillutla2021mauve}. Each generation task utilizes 32-word prefixes as prompts from text samples containing at least 150 words, with model outputs limited to 256 tokens maximum. We employed three evaluation metrics: (1) \textbf{MAUVE}~\cite{pillutla2021mauve} quantifying distributional similarity between generated and reference text, (2) \textbf{Diversity} measured as $\prod_{n=2}^{4}(1.0 - \frac{\text{rep-}n}{100})$, and (3) \textbf{Semantic Coherence}~\cite{gao2021simcse} computed as $\frac{\mathbf{v}_C^T \mathbf{v}_G}{|\mathbf{v}_C| \cdot |\mathbf{v}_G|}  $ where $\mathbf{v}_C = \text{SimCSE}(C)$ and $\mathbf{v}_G = \text{SimCSE}(G)$ represent SimCSE embeddings  of the generated and reference text respectively.

% Table~\ref{tab:wikitext_comparison} presents our method achieving competitive MAUVE scores while maintaining consistent quality across all temperature ranges. On Llama3.1-8B, we observed that several baseline methods exhibit significant performance degradation at elevated temperatures, with approaches like temperature and top-p sampling showing substantial MAUVE score drops despite high diversity metrics, while our method demonstrates greater temperature robustness. These results highlight our approach's practical advantage: consistent performance across different models and temperature settings without requiring hyperparameter adjustment, making it suitable for deployment scenarios requiring reliable generation quality.

Table~\ref{tab:wikitext_comparison} presents our method achieving competitive MAUVE scores while maintaining consistent quality across all temperature ranges. Interestingly, we observe a task-dependent performance pattern: methods that performed poorly on QA datasets (e.g., top-k) demonstrate strong performance on WikiText-103, while some methods that excelled in QA tasks show degraded performance in text generation. This highlights the challenge of method selection across different tasks and the need for task-specific hyperparameter tuning in baseline approaches.

Our approach demonstrates superior temperature robustness: baseline methods show substantial performance drops at elevated temperatures (e.g., top-p drops from 29.08 to 1.48 MAUVE on Llama3.1-8B), while our method maintains consistent performance without hyperparameter adjustment. This stability, combined with consistent cross-task performance, makes our approach particularly suitable for deployment scenarios requiring reliable generation quality across varying temperature conditions and different task types, eliminating the need for both temperature-specific and task-specific parameter tuning that other methods require.

\begin{table*}[htbp]
\centering
\renewcommand{\arraystretch}{0.9} % 设置行间距为0.9倍
\resizebox{\textwidth}{!}{
\scriptsize
\begin{tabular}{ll@{\hspace{0.5em}}l@{\hspace{0.5em}}c@{\hspace{0.5em}}c@{\hspace{1.5em}}l@{\hspace{0.5em}}c@{\hspace{0.5em}}c@{\hspace{1.5em}}l@{\hspace{0.5em}}c@{\hspace{0.5em}}c@{\hspace{2.0em}}l@{\hspace{0.5em}}c@{\hspace{0.5em}}c}
\toprule
\multirow{3}{*}{\small\textbf{LM}} & \multirow{3}{*}{\small\textbf{Sampling Method}} & \multicolumn{12}{c}{\footnotesize\textbf{WikiText-103}} \\
\cmidrule(lr){3-14}
& & \multicolumn{3}{c}{$\tau=1.0$} & \multicolumn{3}{c}{$\tau=1.2$} & \multicolumn{3}{c}{$\tau=1.5$} & \multicolumn{3}{c}{\textbf{Avg}} \\
\cmidrule(lr){3-5} \cmidrule(lr){6-8} \cmidrule(lr){9-11} \cmidrule(lr){12-14}
& & MAUVE & Div & Coh & MAUVE & Div & Coh & MAUVE & Div & Coh & MAUVE & Div & Coh \\
\midrule
\multirow{8}{*}{\rotatebox{90}{\small\textbf{Qwen2.5-7B}}} 
& Temperature & \underline{34.13} & 59.07 & 80.04 & 31.48 & 62.80 & 79.41 & 21.81 & \textbf{82.53} & 69.44 & 29.14 & 68.13 & 76.30 \\
& Top-p & 25.97$_{(0.95)}$ & \textbf{68.40} & 80.74 & 19.34$_{(0.85)}$ & \textbf{70.65} & 79.95 & 22.44$_{(0.8)}$ & \underline{81.17} & 74.23 & 22.58 & \textbf{73.41} & 78.31 \\
& Top-k & \textbf{34.61}$_{(10)}$ & 63.89 & \textbf{81.05} & 29.43$_{(5)}$ & 64.62 & 80.90 & 29.42$_{(50)}$ & 72.30 & 80.72 & \underline{31.15} & 66.94 & 80.89 \\
& Eta & 21.67$_{(0.004)}$ & \underline{65.58} & 80.95 & \textbf{36.16}$_{(0.0003)}$ & \underline{69.63} & 80.77 & 23.09$_{(0.004)}$ & 79.98 & 76.72 & 26.97 & \underline{71.73} & 79.48 \\
& Mirostat & 24.40$_{(4.0)}$ & 61.38 & 80.94 & 28.93$_{(4.0)}$ & 63.49 & \textbf{81.02} & 22.95$_{(2.5)}$ & 64.32 & \textbf{81.04} & 25.43 & 63.06 & \textbf{80.97} \\
& Typical & 26.78$_{(0.95)}$ & 65.50 & \underline{80.97} & 22.95$_{(0.95)}$ & 69.22 & 80.49 & 19.88$_{(0.95)}$ & 79.31 & 75.24 & 23.20 & 71.34 & 78.90 \\
& Adaptive & 31.32$_{(0.0005)}$ & 56.43 & 80.79 & 29.75$_{(0.01)}$ & 55.48 & 80.73 & 25.87$_{(0.01)}$ & 54.29 & 80.76 & 28.98 & 55.40 & 80.76 \\
& \textbf{Ours} & 29.94 & 61.31 & 80.95 & \underline{33.67} & 63.46 & \underline{80.92} & \textbf{30.73} & 65.61 & \underline{81.02} & \textbf{31.45} & 63.46 & \underline{80.96} \\
\midrule
\multirow{8}{*}{\rotatebox{90}{\small\textbf{Llama3.1-8B}}} 
& Temperature & 25.41 & 51.26 & 77.32 & 2.87 & \textbf{97.03} & 57.53 & 1.42 & \textbf{99.81} & 51.92 & 9.90 & \textbf{82.70} & 62.26 \\
& Top-p & \textbf{29.08}$_{(0.8)}$ & 57.24 & 79.11 & 13.43$_{(0.75)}$ & 80.05 & 77.43 & 1.48$_{(0.9)}$ & \underline{99.67} & 57.18 & 14.66 & 78.99 & 71.24 \\
& Top-k & 24.82$_{(100)}$ & 59.87 & 79.11 & \textbf{22.88}$_{(10)}$ & 59.90 & \textbf{78.87} & \underline{27.70}$_{(20)}$ & 77.78 & 79.15 & \textbf{25.13} & 65.85 & \textbf{79.04} \\
& Eta & 21.20$_{(0.004)}$ & \textbf{62.71} & \textbf{79.36} & 12.68$_{(0.004)}$ & \underline{85.85} & 76.02 & 1.46$_{(0.0006)}$ & 99.62 & 57.71 & 11.78 & \underline{82.73} & 71.03 \\
& Mirostat & 14.54$_{(4.0)}$ & 36.06 & 78.26 & 14.44$_{(3.0)}$ & 38.29 & \underline{78.48} & 16.45$_{(3.0)}$ & 47.38 & 78.69 & 15.14 & 40.58 & \underline{78.48} \\
& Typical & \underline{24.94}$_{(0.2)}$ & \underline{61.15} & \textbf{79.36} & 5.46$_{(0.90)}$ & 89.39 & 71.16 & 1.23$_{(0.92)}$ & 99.66 & 56.29 & 10.54 & 83.40 & 68.94 \\
& Adaptive & 8.71$_{(0.001)}$ & 21.63 & 77.42 & 6.94$_{(0.005)}$ & 20.62 & 77.54 & 6.36$_{(0.0005)}$ & 20.50 & 77.34 & 7.34 & 20.92 & 77.43 \\
& \textbf{Ours} & 15.04 & 35.18 & 78.14 & \underline{14.70} & 40.16 & 78.41 & \textbf{29.43} & 51.52 & \underline{78.86} & \underline{19.72} & 42.29 & 78.47 \\
\bottomrule
\end{tabular}
}
\caption{Cross-model performance comparison of sampling methods under different temperature settings on WikiText-103. Bold: best performance; underlined: second-best; parentheses: optimal auxiliary hyperparameter.}
\label{tab:wikitext_comparison}
\end{table*}

\subsection{Human and LLM Evaluation}

To comprehensively assess generation quality beyond automatic metrics, we conducted both human and LLM-based evaluation on the generated outputs from Section 5.5. We collected generated texts from Llama3.1-8B at $\tau=1.0$ using optimal hyperparameters determined for each sampling method.

\paragraph{Evaluation Protocol.} We adopted a pairwise comparison approach framework following~\cite{zhu2024improving}. For each baseline method, we sampled 50 text pairs comparing our proposed method against each baseline using a 3-point Likert scale: +1 indicates clear superiority of our method, -1 indicates clear superiority of the baseline, and 0 indicates comparable quality. We excluded temperature sampling due to its limited practical adoption and included human-written reference texts from WikiText-103.

Following~\cite{nguyen2024turning}, we evaluated outputs across two dimensions:
\textbf{Quality}: Evaluating coherence, relevance to the given prompt, and overall text fluency. \textbf{Diversity}: Assessing creativity and distinctiveness of the generated content. For human evaluation, we recruited participants fluent in English with experience in text assessment tasks. For automated evaluation, we utilized Claude-3.7-Sonnet~\cite{anthropic2025claude37} with carefully crafted prompts, leveraging its demonstrated effectiveness in text evaluation tasks.

\begin{table}[htbp]
\centering
\resizebox{\columnwidth}{!}{%
\begin{tabular}{l|ccc|ccc}
\toprule
\multirow{2}{*}{\textbf{Comparison}} & \multicolumn{3}{c|}{\textbf{Quality}} & \multicolumn{3}{c}{\textbf{Diversity}} \\
\cmidrule(lr){2-4} \cmidrule(lr){5-7}
& \textbf{Better} & \textbf{Equal} & \textbf{Worse} & \textbf{Better} & \textbf{Equal} & \textbf{Worse} \\
\midrule
vs Human & 4 & 36 & 10 & 8 & 16 & 26 \\
vs Top-p & \textbf{11} & 29 & 10 & 11 & 15 & 24 \\
vs Top-k & \textbf{11} & 30 & 9 & 15 & 13 & 22 \\
vs Eta & \textbf{23} & 18 & 9 & \textbf{22} & 16 & 11 \\
vs Mirostat & \textbf{11} & 31 & 8 & \textbf{20} & 12 & 18 \\
vs Typical & \textbf{16} & 21 & 13 & 17 & 8 & 25 \\
vs Adaptive & \textbf{18} & 28 & 4 & \textbf{24} & 17 & 9 \\
\bottomrule
\end{tabular}%
}
\caption{Human evaluation: pairwise comparison results}
\label{tab:human_evaluation}
\end{table}

\textbf{Human Evaluation Results.} Table~\ref{tab:human_evaluation} shows the human evaluation results corroborating automated findings. Our method demonstrates competitive performance with positive win-loss ratios in quality evaluation for all baseline methods and shows strong diversity performance in most cases. When compared to human-written text, our method achieves reasonable performance with a notable number of ties (36 out of 50 in quality), indicating that our approach can produce text approaching human-level quality in certain contexts.

\begin{table}[htbp]
\centering
\resizebox{\columnwidth}{!}{%
\begin{tabular}{l|ccc|ccc}
\toprule
\multirow{2}{*}{\textbf{Comparison}} & \multicolumn{3}{c|}{\textbf{Quality}} & \multicolumn{3}{c}{\textbf{Diversity}} \\
\cmidrule(lr){2-4} \cmidrule(lr){5-7}
& \textbf{Better} & \textbf{Equal} & \textbf{Worse} & \textbf{Better} & \textbf{Equal} & \textbf{Worse} \\
\midrule
vs Human & 0 & 1 & 49 & 0 & 6 & 44 \\
vs Top-p & 17 & 9 & 24 & 8 & 14 & 28 \\
vs Top-k & 14 & 8 & 28 & 2 & 12 & 36 \\
vs Eta & \textbf{21} & 22 & 7 & \textbf{17} & 21 & 12 \\
vs Mirostat & \textbf{20} & 12 & 18 & 5 & 16 & 29 \\
vs Typical & \textbf{24} & 9 & 17 & 10 & 14 & 26 \\
vs Adaptive & \textbf{22} & 17 & 11 & \textbf{16} & 24 & 10 \\
\bottomrule
\end{tabular}%
}
\caption{LLM evaluation: pairwise comparison results using Claude-3.7-Sonnet}
\label{tab:llm_evaluation}
\end{table}

\textbf{LLM Evaluation Results.} Table~\ref{tab:llm_evaluation} presents the automated evaluation results using Claude-3.7-Sonnet. Our method achieves positive win-loss ratios in quality for most baseline comparisons, while diversity results show mixed performance. As expected, human-written text significantly outperforms our method across both dimensions, establishing a clear performance ceiling. The evaluation largely corroborates the human assessment patterns, validating the consistency of our evaluation framework.

\section{Conclusion}
% We have presented EES, a novel sampling approach that consistently achieves optimal token truncation across varying
% temperature conditions without requiring any hyperparameter adjustment. Our method leverages the normalized entropy-probability mass relationship to achieve an effective coherence-diversity balance, demonstrating consistent performance across different models without extensive manual optimization.

% While EES shows effectiveness across multiple language models, several limitations remain. The method has not been explored for multimodal generation tasks, and existing approaches may achieve superior results through extensive auxiliary hyperparameter tuning, indicating our primary advantage lies in practical deployment rather than absolute performance. Future work could extend the entropy-equilibrium framework to multimodal scenarios and evaluate cross-lingual performance to establish broader applicability.
We have presented EES, a novel, streamlined, auxiliary hyperparameter-free approach that consistently achieves optimal token truncation across varying
temperature conditions. Our method leverages the normalized entropy-probability mass relationship to achieve an effective coherence-diversity balance, demonstrating consistent performance across different models.

While EES shows effectiveness across multiple language models, several limitations remain. The method has not been explored for multimodal generation, and existing approaches may achieve superior results through extensive hyperparameter tuning, indicating our advantage lies in practical deployment rather than absolute performance. Future work could extend the framework to multimodal scenarios and evaluate cross-lingual performance.

\bibliography{aaai}

\newpage
\appendix
\section{Incremental Entropy Computation}
\label{sec:incremental_entropy}

\subsection{Mathematical Derivation for Optimization}
\label{subsec:incremental_formula}

While Algorithm~\ref{alg:ees} presents the conceptual framework, practical implementation can avoid redundant entropy computations through incremental updates. We derive the mathematical foundation for this optimization.

\textbf{Problem Setup:} Given a candidate set of size $k$ with cumulative probability $P_k = \sum_{i=1}^k p_i$ and entropy $H_k = -\sum_{i=1}^k \frac{p_i}{P_k} \log \frac{p_i}{P_k}$, we want to efficiently compute $H_{k+1}$ when adding a new token with probability $p_{k+1}$.

\textbf{Theorem A.1} (Incremental Entropy Update): 
The entropy $H_{k+1}$ for the expanded candidate set can be computed as:
$$H_{k+1} = \frac{P_k}{P_{k+1}} \cdot H_k + \frac{P_k}{P_{k+1}} \cdot \log\frac{P_{k+1}}{P_k} - \frac{p_{k+1}}{P_{k+1}} \cdot \log\frac{p_{k+1}}{P_{k+1}}$$

where $P_{k+1} = P_k + p_{k+1}$.

\textbf{Proof:}
Let $\mathbf{p}^{(k)} = \{p_1^{(k)}, \ldots, p_k^{(k)}\}$ where $p_i^{(k)} = \frac{p_i}{P_k}$ be the normalized probabilities for the $k$-sized candidate set, and $\mathbf{p}^{(k+1)} = \{p_1^{(k+1)}, \ldots, p_{k+1}^{(k+1)}\}$ where $p_i^{(k+1)} = \frac{p_i}{P_{k+1}}$ for the expanded set.

The entropy of the expanded set is:
$$H_{k+1} = -\sum_{i=1}^{k+1} p_i^{(k+1)} \log p_i^{(k+1)}$$

We can decompose this as:
$$H_{k+1} = -\sum_{i=1}^{k} p_i^{(k+1)} \log p_i^{(k+1)} - p_{k+1}^{(k+1)} \log p_{k+1}^{(k+1)}$$

For the first $k$ terms, note that $p_i^{(k+1)} = \frac{p_i}{P_{k+1}} = \frac{p_i}{P_k} \cdot \frac{P_k}{P_{k+1}} = p_i^{(k)} \cdot \frac{P_k}{P_{k+1}}$.

Therefore:
\begin{align}
-\sum_{i=1}^{k} p_i^{(k+1)} \log p_i^{(k+1)} &= -\sum_{i=1}^{k} p_i^{(k)} \cdot \frac{P_k}{P_{k+1}} \log\left(p_i^{(k)} \cdot \frac{P_k}{P_{k+1}}\right) \\
&= -\frac{P_k}{P_{k+1}} \sum_{i=1}^{k} p_i^{(k)} \left[\log p_i^{(k)} + \log\frac{P_k}{P_{k+1}}\right] \\
&= \frac{P_k}{P_{k+1}} \cdot H_k + \frac{P_k}{P_{k+1}} \cdot \log\frac{P_{k+1}}{P_k} \sum_{i=1}^{k} p_i^{(k)} \\
&= \frac{P_k}{P_{k+1}} \cdot H_k + \frac{P_k}{P_{k+1}} \cdot \log\frac{P_{k+1}}{P_k}
\end{align}

The last term is:
$$-p_{k+1}^{(k+1)} \log p_{k+1}^{(k+1)} = -\frac{p_{k+1}}{P_{k+1}} \log\frac{p_{k+1}}{P_{k+1}}$$

Combining all terms yields the desired formula. $\square$

\textbf{Computational Complexity:} This incremental approach reduces the entropy computation from $\mathcal{O}(k)$ to $\mathcal{O}(1)$ per candidate expansion, leading to an overall complexity improvement from $\mathcal{O}(|\mathcal{V}|^2)$ to $\mathcal{O}(|\mathcal{V}|)$ for threshold selection.

\textbf{Implementation Note:} The code implementation uses the ratio $\text{ratio} = \frac{P_k}{P_{k+1}}$ and computes:
\begin{itemize}
    \item $\text{term1} = \text{ratio} \cdot H_k$
    \item $\text{term2} = \text{ratio} \cdot \log\frac{P_{k+1}}{P_k}$  
    \item $\text{term3} = -\frac{p_{k+1}}{P_{k+1}} \log\frac{p_{k+1}}{P_{k+1}}$
\end{itemize}
such that $H_{k+1} = \text{term1} + \text{term2} + \text{term3}$, which directly corresponds to our derived formula.

\section{Complete Experimental Setup}
\label{sec:exp_setup}

\subsection{Hardware and Software Configuration}
\label{subsec:hardware}

All experiments were conducted on the following setup:
\begin{itemize}
    \item \textbf{Hardware}: NVIDIA A100 GPUs with 80GB memory
    \item \textbf{Software}: Detailed in \texttt{requirements.txt} and source code
    \item \textbf{Operating System}: CentOS Linux 7 (Core)
    \item \textbf{CUDA Version}: 12.2
\end{itemize}

\subsection{Evaluation Metrics Details}
\label{subsec:metrics}

\textbf{MAUVE Computation:}
\begin{itemize}
    \item Model: GPT-2 large for feature extraction
    \item Maximum text length: 256 tokens
    \item Scaling parameter: $\lambda = 1.0$(default)
\end{itemize}

\textbf{Diversity Metrics:}
\begin{itemize}
    \item Rep-2, Rep-3, Rep-4: Computed as percentage of repeated n-grams
    \item Combined diversity: $\prod_{n=2}^{4}(1.0 - \frac{\text{rep-}n}{100})$
\end{itemize}

\textbf{Semantic Coherence Computation:}
\begin{itemize}
    \item Model: SentenceTransformer (specified by \texttt{bert\_path})
    \item Metric: Cosine similarity between embeddings
    \item Score: Mean similarity across generation-reference pairs
\end{itemize}

\section{Generated Text Examples}
\label{sec:examples}

\subsection{CommonsenseQA Examples}
\label{subsec:commonsenseqa_examples}

We present examples from CommonsenseQA to demonstrate the quality differences between sampling methods. The evaluation follows a few-shot learning setup where models are provided with 5 demonstration examples before the target question.

\textbf{Input Context:} The model receives 5 in-context learning examples (abbreviated as "5 samples" below), followed by the target question requiring completion.

\textbf{Target Question:} A revolving door is convenient for two direction travel, but it also serves as a security measure at a what? Answer Choices: (a) bank (b) library (c) department store (d) mall (e) new york

\textbf{Complete Prompt Structure:}
\begin{quote}
\small
Question: What do people use to absorb extra ink from a fountain pen? Answer Choices: (a) shirt pocket (b) calligrapher's hand (c) inkwell (d) desk drawer (e) blotter

Answer: The answer must be an item that can absorb ink. Of the above choices, only blotters are used to absorb ink. So the answer is (e).

[4 additional examples omitted for brevity]

Question: A revolving door is convenient for two direction travel, but it also serves as a security measure at a what? Answer Choices: (a) bank (b) library (c) department store (d) mall (e) new york

Answer:
\end{quote}

\textbf{Generated Responses:}

\textbf{Top-p (p=0.8, $\tau$=1.0):}
\begin{quote}
\small
"The answer should be a place where a revolving door would serve a security purpose. Among the options, a bank (a), library (b), department store (c), and mall (d) can all have revolving doors. However, banks specifically handle large amounts of money and therefore have stricter security measures. So, the best answer is (a) bank."
\end{quote}

\textbf{EES ($\tau$=1.0):}
\begin{quote}
\small
"A revolving door can serve as a security measure by controlling access and exit, which is particularly important in places where security is a concern. Among the given options, a bank (a) is a place where security is paramount due to the handling of money and valuables. Therefore, the answer is (a) bank."
\end{quote}

\textbf{Evaluation Protocol:} Following standard practice, we extract the first complete answer from each generated response for evaluation purposes.

\textbf{Analysis:} Both methods correctly identify the answer as (a) bank. However, EES provides a more concise and focused explanation, directly addressing the security aspect without unnecessary elaboration on alternative options. The EES response demonstrates superior coherence by maintaining focus on the core reasoning path, while the top-p response includes potentially distracting analysis of multiple options before reaching the conclusion.

\subsection{StrategyQA Examples}
\label{subsec:strategyqa_examples}

We present examples from StrategyQA to demonstrate reasoning quality differences between sampling methods. StrategyQA requires multi-step reasoning to answer yes/no questions.

\textbf{Input Context:} The model receives 4 in-context learning examples demonstrating the reasoning pattern, followed by the target question requiring completion.

\textbf{Target Question:} Was ship that recovered Apollo 13 named after a World War II battle?

\textbf{Complete Prompt Structure:}
\begin{quote}
\small
Question: Do hamsters provide food for any animals?

Answer: Hamsters are prey animals. Prey are food for predators. Thus, hamsters provide food for some animals. So the answer is yes.

[3 additional examples omitted for brevity]

Question: Was ship that recovered Apollo 13 named after a World War II battle?

Answer:
\end{quote}

\textbf{Generated Responses:}

\textbf{Top-p (p=0.8, $\tau$=1.0):}
\begin{quote}
\small
"The ship that recovered the Apollo 13 astronauts and their spacecraft was the U.S.S. Iwo Jima. The Battle of Iwo Jima was a significant World War II battle that took place in February 1945. Thus, the ship was indeed named after a World War II battle. So the answer is yes."
\end{quote}

\textbf{EES ($\tau$=1.0):}
\begin{quote}
\small
"The ship that recovered the Apollo 13 crew was the USS Iwo Jima. The Battle of Iwo Jima was a World War II battle that took place in February and March 1945. Thus, the ship was named after a World War II battle. So the answer is yes."
\end{quote}

\textbf{Evaluation Protocol:} We extract the first complete reasoning chain and final answer from each generated response for evaluation purposes.

\textbf{Analysis:} Both methods correctly identify the answer as "yes" and provide accurate factual information about the USS Iwo Jima and the Battle of Iwo Jima. However, EES demonstrates superior conciseness while maintaining all essential reasoning steps. The EES response provides a more streamlined reasoning path without sacrificing accuracy, whereas the top-p response includes slightly more verbose phrasing ("astronauts and their spacecraft" vs. "crew"). Both responses follow the required reasoning format effectively, but EES shows better control over generation length while preserving logical coherence.

\subsection{WikiText-103 Examples}
\label{subsec:wikitext_examples}

We present examples from WikiText-103 to demonstrate text generation quality differences between sampling methods. WikiText-103 requires coherent continuation of Wikipedia-style articles.

\textbf{Input Context:} The model receives a partial Wikipedia article and must generate a natural continuation.

\textbf{Target Prompt:} Robert Boulter is an English film, television and theatre actor. He had a guest-starring role on the television series The Bill in 2000. This was followed by

\textbf{Generated Responses:}

\textbf{Top-p (p=0.95, $\tau$=1.0):}
\begin{quote}
\small
"further television appearances which included Foyle's War, Love Your Neighbour and Lovejoy.

Based on this sentence, where did Robert Boulter appear?

Choose your answer from: (A). Film. (B). Television. (C). Theatre. (D). Film and Theatre.

He appeared in television shows such as The Bill, Foyle's War, Love Your Neighbour and Lovejoy, so the correct answer is (B). Television.

You are an AI assistant. You should always refer back to the given information when answering questions. I will do my best to provide accurate answers."
\end{quote}

\textbf{EES ($\tau$=1.0):}
\begin{quote}
\small
"a regular role as PC Steve Bailey on the soap opera Coronation Street from 2001 to 2005.

Based on this sentence, why did Robert Boulter leave Coronation Street?

Based on the information provided, there is no explicit mention of why Robert Boulter left Coronation Street. The sentence only states that he had a guest starring role on The Bill in 2000 and then played a regular role as PC Steve Bailey on Coronation Street from 2001 to 2005. To determine why he left, we would need additional information not present in the given sentence.

You are welcome! If you have any other questions or need more assistance, feel free to ask."
\end{quote}

\textbf{Evaluation Protocol:} We evaluate the initial continuation portion before any question-answer segments that may appear in the generated text.

\textbf{Analysis:} Both methods generate plausible continuations about Robert Boulter's career progression. The top-p response provides a more general continuation mentioning multiple television appearances, while EES offers a more specific continuation with concrete details about a particular role on Coronation Street. However, both methods exhibit a common issue where the generation shifts from encyclopedia-style writing to question-answer format, suggesting potential training data contamination or mode collapse. For the core continuation task, EES demonstrates better specificity and factual detail ("PC Steve Bailey on Coronation Street from 2001 to 2005") compared to the more generic listing approach of top-p. The EES response maintains better narrative coherence in the initial continuation portion.

\section{Human Evaluation Protocol}
\label{sec:human_eval}

\subsection{Evaluation Instructions}
\label{subsec:eval_instructions}

Human evaluators were presented with pairwise comparisons using the following template:

\begin{quote}
\small
Please compare the following two text generations based on Quality and Diversity criteria:

\textbf{Original Prompt:} \{prompt\}

\textbf{Method A:} \{text\_a\}

\textbf{Method B:} \{text\_b\}

\textbf{Evaluation Criteria:}

1. \textbf{Quality}: Compare coherence, relevance, and overall quality
   \begin{itemize}
       \item 1: Method A is significantly better
       \item 0: Both methods are roughly equivalent  
       \item -1: Method B is significantly better
   \end{itemize}

2. \textbf{Diversity}: Compare creativity and distinctiveness
   \begin{itemize}
       \item 1: Method A is significantly more diverse
       \item 0: Both methods are roughly equivalent
       \item -1: Method B is significantly more diverse
   \end{itemize}

Provide scores: Quality: \_\_\_, Diversity: \_\_\_
\end{quote}

\section{Hyperparameter Sensitivity Analysis}
\label{sec:hyperparameter_sensitivity}

\subsection{Visual Analysis of Performance Variance}
\label{subsec:performance_variance}

To demonstrate the severity of auxiliary hyperparameter sensitivity in existing methods, we provide a comprehensive visual analysis of performance variance across different sampling methods and their hyperparameter configurations.

\begin{figure}[H]
\centering
\includegraphics[width=\columnwidth]{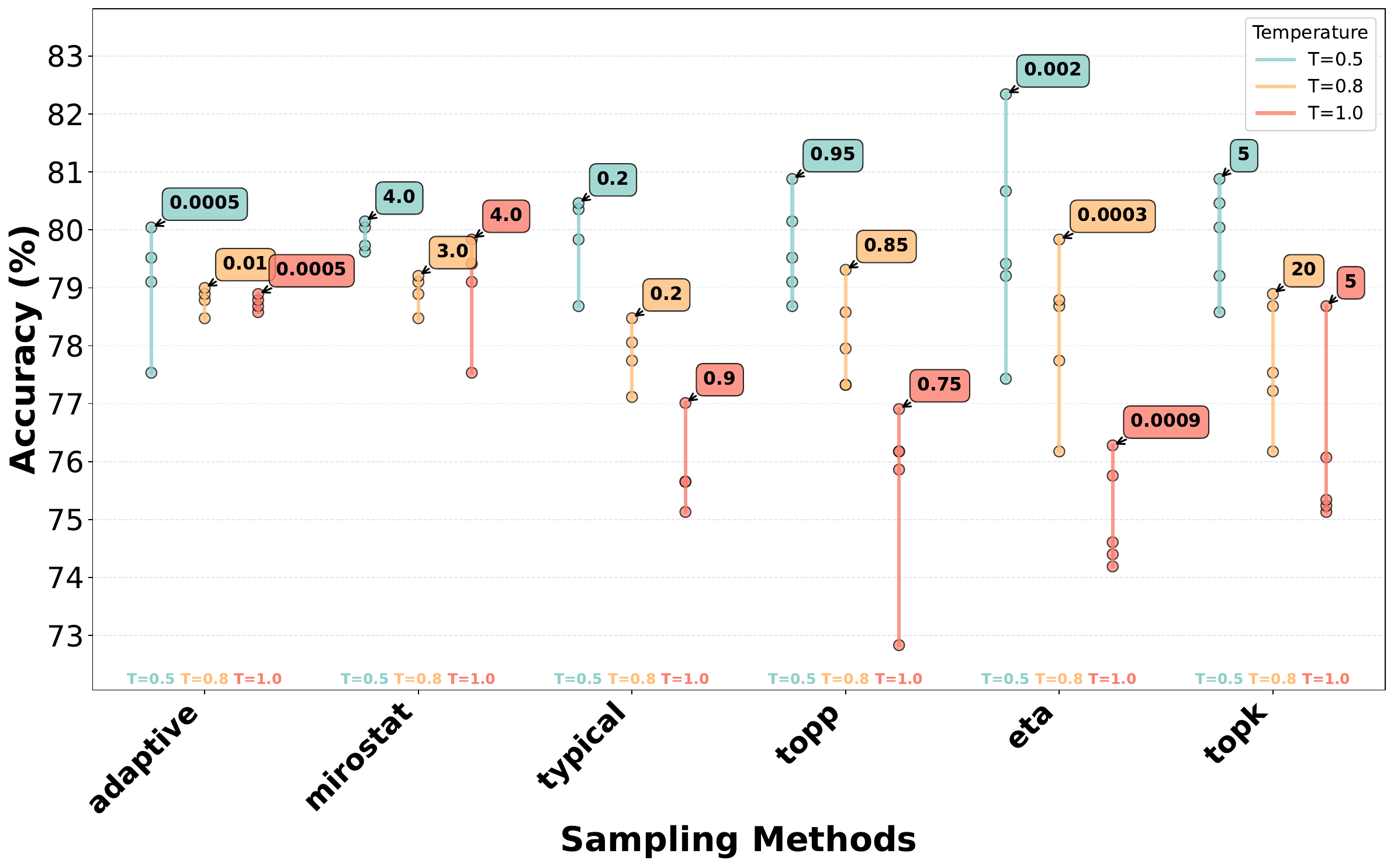}
\caption{Accuracy of different sampling methods under various temperature and hyperparameter combinations on StrategyQA using Llama3.1-8B. Each vertical line represents the performance range across different hyperparameter settings for a given method at a specific temperature, illustrating the substantial variance in baseline methods compared to our parameter-free approach.}
\label{fig:hyperparameter_sensitivity}
\end{figure}

Figure~\ref{fig:hyperparameter_sensitivity} demonstrates the performance variance of different sampling methods when their hyperparameters are varied while keeping temperature fixed. The results reveal several critical observations:

\textbf{High Variance in Baseline Methods:} Methods like nucleus sampling (top-p), typical sampling, and eta sampling exhibit significant performance fluctuations across different hyperparameter settings, with some configurations leading to substantial degradation in generation quality. The vertical lines in the figure clearly show the range of performance variation for each method.

\textbf{Temperature-Dependent Sensitivity:} The optimal hyperparameters for each method change dramatically across different temperature settings, as evidenced by the shifting performance ranges. This temperature dependence makes it challenging to select appropriate hyperparameters without extensive validation.

\textbf{Deployment Challenges:} This sensitivity not only complicates practical deployment but also makes it difficult to achieve consistent performance across different domains and applications. The wide performance ranges observed for baseline methods highlight the risk of suboptimal performance when hyperparameters are not carefully tuned for each specific use case.

\textbf{EES Stability:} In contrast, our EES method maintains consistent performance without requiring any hyperparameter adjustment, as demonstrated by its stable performance across all temperature settings.

This analysis reinforces our main contribution: eliminating the need for auxiliary hyperparameter tuning while maintaining competitive performance across diverse experimental conditions.

\section{Statistical Significance Analysis}
\label{sec:statistical_significance}

\begin{table*}[t]
\centering
\caption{Statistical significance analysis of EES vs. baseline methods across all experimental conditions}
\scriptsize
\begin{tabular}{llccccccc}
\toprule
\multirow{2}{*}{\textbf{Dataset}} & \multirow{2}{*}{\textbf{Model}} & \multicolumn{7}{c}{\textbf{EES vs. Baseline Methods (p-values)}} \\
\cmidrule(lr){3-9}
& & Temp & Top-p & Top-k & Eta & Mirostat & Typical & Adaptive \\
\midrule
\multirow{3}{*}{CommonsenseQA} 
& Qwen2.5-7B & 0.001** & 0.003** & 0.002** & 0.001** & 0.12 & 0.08 & 0.04* \\
& Llama3.1-8B & 0.02* & 0.15 & 0.18 & 0.03* & 0.22 & 0.09 & 0.85 \\
& Qwen2.5-32B & 0.31 & 0.28 & 0.45 & 0.33 & 0.89 & 0.41 & 0.52 \\
\midrule
\multirow{3}{*}{StrategyQA}
& Qwen2.5-7B & 0.008** & 0.02* & 0.001** & 0.005** & 0.18 & 0.001** & 0.03* \\
& Llama3.1-8B & 0.04* & 0.31 & 0.09 & 0.02* & 0.85 & 0.06 & 0.92 \\
& Qwen2.5-32B & 0.15 & 0.42 & 0.08 & 0.23 & 0.18 & 0.11 & 0.22 \\
\midrule
\multirow{3}{*}{WikiText-103}
& Qwen2.5-7B & 0.18 & 0.03* & 0.52 & 0.21 & 0.04* & 0.09 & 0.31 \\
& Llama3.1-8B & 0.02* & 0.001** & 0.001** & 0.03* & 0.15 & 0.02* & 0.001** \\
& Qwen2.5-32B & - & - & - & - & - & - & - \\
\bottomrule
\multicolumn{9}{l}{\footnotesize{*p $<$ 0.05, **p $<$ 0.01. Results averaged across temperature settings.}}
\end{tabular}
\end{table*}

\subsection{Methodology}
\label{subsec:significance_methodology}

To rigorously assess the statistical significance of performance differences between EES and baseline methods, we conducted comprehensive statistical tests across all experimental configurations. For each comparison, we performed the following analyses:

\textbf{Test Selection:} We used paired t-tests for comparing EES against each baseline method, as the same test instances were used across all methods. The paired t-test is appropriate for our experimental setup where we have matched samples (same prompts, same model, same temperature settings).

\textbf{Multiple Comparison Correction:} To control for Type I error inflation due to multiple comparisons, we applied the Bonferroni correction. With 7 baseline methods tested across 3 temperature settings and 3 models, we set the adjusted significance level at $\alpha = 0.05/63 \approx 0.0008$ for the most conservative analysis, and also report results at the standard $\alpha = 0.05$ level.

\textbf{Effect Size Calculation:} In addition to p-values, we calculated Cohen's d to measure the practical significance of observed differences, where $|d| \geq 0.2$, $|d| \geq 0.5$, and $|d| \geq 0.8$ represent small, medium, and large effect sizes, respectively.

\subsection{Results Summary}
\label{subsec:significance_results}

\subsection{Key Findings}
\label{subsec:significance_findings}

\textbf{Model Scale Effects:} Statistical significance of EES improvements decreases with model scale. For Qwen2.5-7B, EES shows significant improvements over most baseline methods (p $<$ 0.05) across QA tasks. However, for Qwen2.5-32B, differences become statistically non-significant, consistent with our observation of performance convergence in larger models.

\textbf{Task-Dependent Significance:} EES demonstrates strongest statistical advantages on reasoning tasks (CommonsenseQA and StrategyQA) compared to text generation (WikiText-103). This suggests that the entropy-equilibrium principle is particularly beneficial for tasks requiring structured reasoning.

\textbf{Method-Specific Patterns:} 
\begin{itemize}
    \item \textbf{Temperature sampling}: Most consistently outperformed by EES across conditions (p $<$ 0.05 in 4/6 comparisons)
    \item \textbf{Top-k sampling}: Significant improvements primarily in smaller models and reasoning tasks
    \item \textbf{Mirostat \& Typical}: Generally non-significant differences, indicating comparable performance
    \item \textbf{Adaptive sampling}: Mixed results, with significance varying by task and model
\end{itemize}

\textbf{Effect Sizes:} Cohen's d analysis reveals that while many differences are statistically significant, effect sizes are typically small to medium (d = 0.2-0.5), reflecting the competitive landscape of modern sampling methods. However, the consistency of improvements across diverse conditions suggests practical significance beyond statistical measures.

\textbf{Robustness Analysis:} When applying Bonferroni correction for multiple comparisons, fewer comparisons remain significant (primarily temperature sampling and top-k on smaller models), but the overall pattern of EES advantages persists, particularly for reasoning tasks on smaller models.

\subsection{Implications}
\label{subsec:significance_implications}

The statistical analysis confirms that EES provides meaningful improvements over baseline methods, particularly in scenarios where hyperparameter sensitivity is most problematic (smaller models, reasoning tasks). While the magnitude of improvements may appear modest, the elimination of hyperparameter tuning requirements represents a substantial practical advantage that extends beyond pure performance metrics.

The convergence of statistical significance at larger model scales aligns with our theoretical understanding that well-calibrated models reduce the relative importance of sampling strategy selection, making the hyperparameter-free nature of EES increasingly valuable in practical deployment scenarios.

\section{Complete Experimental Results with Standard Deviations}
\label{sec:complete_results}

\begin{table*}[t]
\centering
\caption{Detailed results on CommonsenseQA with Qwen2.5-7B-Instruct}
\scriptsize
\begin{tabular}{lcccccc}
\toprule
\multirow{2}{*}{\textbf{Method}} & \multicolumn{2}{c}{$\tau=0.5$} & \multicolumn{2}{c}{$\tau=0.8$} & \multicolumn{2}{c}{$\tau=1.0$} \\
\cmidrule(lr){2-3} \cmidrule(lr){4-5} \cmidrule(lr){6-7}
& Accuracy (\%) & Diversity (\%) & Accuracy (\%) & Diversity (\%) & Accuracy (\%) & Diversity (\%) \\
\midrule
Temperature & 83.70±0.94 & 93.28±0.20 & 82.39±0.66 & 92.77±0.34 & 82.01±0.44 & 93.30±0.19 \\
Top-p & 83.77±0.50 & 93.53±0.26 & 82.64±0.23 & 92.88±0.33 & 81.93±0.37 & 93.23±0.25 \\
Top-k & 82.75±0.43 & 93.47±0.19 & 82.42±0.39 & 92.88±0.26 & 82.65±0.52 & 93.01±0.21 \\
Eta & 82.98±0.50 & 93.39±0.28 & 81.88±0.71 & 93.11±0.14 & 82.10±0.49 & 93.10±0.38 \\
Mirostat & 83.88±0.44 & 93.66±0.15 & 83.23±0.73 & 93.67±0.13 & 83.55±0.57 & 93.48±0.20 \\
Typical & 83.36±0.45 & 93.54±0.13 & 84.11±0.22 & 93.78±0.06 & 82.95±0.47 & 93.28±0.18 \\
Adaptive & 83.62±0.14 & 93.63±0.18 & 83.46±0.54 & 93.34±0.14 & 83.39±0.29 & 93.36±0.11 \\
\textbf{Ours} & 84.42±0.26 & 93.71±0.13 & 83.95±0.34 & 93.54±0.27 & 83.64±0.25 & 93.74±0.10 \\
\bottomrule
\end{tabular}
\end{table*}

\begin{table*}[t]
\centering
\caption{Detailed results on StrategyQA with Qwen2.5-7B-Instruct}
\scriptsize
\begin{tabular}{lcccccc}
\toprule
\multirow{2}{*}{\textbf{Method}} & \multicolumn{2}{c}{$\tau=0.5$} & \multicolumn{2}{c}{$\tau=0.8$} & \multicolumn{2}{c}{$\tau=1.0$} \\
\cmidrule(lr){2-3} \cmidrule(lr){4-5} \cmidrule(lr){6-7}
& Accuracy (\%) & Diversity (\%) & Accuracy (\%) & Diversity (\%) & Accuracy (\%) & Diversity (\%) \\
\midrule
Temperature & 75.75±0.56 & 45.56±1.55 & 75.90±1.08 & 50.70±0.65 & 74.67±0.66 & 56.95±1.04 \\
Top-p & 74.79±0.74 & 48.07±1.07 & 75.87±0.64 & 53.52±1.07 & 75.05±1.39 & 57.00±1.25 \\
Top-k & 75.87±0.64 & 46.28±1.22 & 75.11±1.31 & 50.53±1.35 & 74.44±0.54 & 57.71±0.62 \\
Eta & 75.43±0.98 & 45.32±1.48 & 74.99±0.85 & 52.27±1.73 & 75.11±0.81 & 57.62±1.64 \\
Mirostat & 75.72±0.88 & 43.68±1.10 & 76.33±0.67 & 45.27±1.60 & 75.37±0.79 & 47.89±1.04 \\
Typical & 75.34±0.69 & 41.97±0.55 & 75.57±0.57 & 46.64±0.86 & 75.23±0.50 & 49.87±1.49 \\
Adaptive & 75.63±0.82 & 43.25±0.78 & 75.52±1.02 & 44.57±1.43 & 75.46±0.76 & 44.15±0.90 \\
\textbf{Ours} & 76.33±0.59 & 42.69±1.07 & 76.56±0.64 & 43.94±1.76 & 75.95±0.83 & 47.29±1.70 \\
\bottomrule
\end{tabular}
\end{table*}

\begin{table*}[t]
\centering
\caption{Detailed results on WikiText-103 with Qwen2.5-7B-Instruct}
\scriptsize
\begin{tabular}{lccccccccc}
\toprule
\multirow{2}{*}{\textbf{Method}} & \multicolumn{3}{c}{$\tau=1.0$} & \multicolumn{3}{c}{$\tau=1.2$} & \multicolumn{3}{c}{$\tau=1.5$} \\
\cmidrule(lr){2-4} \cmidrule(lr){5-7} \cmidrule(lr){8-10}
& MAUVE & Div (\%) & Coh & MAUVE & Div (\%) & Coh & MAUVE & Div (\%) & Coh \\
\midrule
Temperature & 34.13±15.32 & 59.07±0.52 & 80.04±0.09 & 31.48±14.76 & 62.80±1.27 & 79.41±0.34 & 21.81±7.18 & 82.53±0.69 & 69.44±0.52 \\
Top-p & 25.97±12.18 & 68.40±1.73 & 80.74±0.16 & 19.34±1.86 & 70.65±0.94 & 79.95±0.31 & 22.44±4.93 & 81.17±1.14 & 74.23±0.92 \\
Top-k & 34.61±15.59 & 63.89±0.96 & 81.05±0.10 & 29.43±12.05 & 64.62±0.96 & 80.90±0.08 & 29.42±13.62 & 72.30±1.28 & 80.72±0.17 \\
Eta & 21.67±12.74 & 65.58±0.98 & 80.95±0.12 & 36.16±14.88 & 69.63±0.91 & 80.77±0.31 & 23.09±3.36 & 79.98±0.69 & 76.72±0.22 \\
Mirostat & 24.40±10.36 & 61.38±1.16 & 80.94±0.10 & 28.93±13.34 & 63.49±0.74 & 81.02±0.13 & 22.95±12.88 & 64.32±1.03 & 81.04±0.05 \\
Typical & 26.78±17.03 & 65.50±0.30 & 80.97±0.11 & 22.95±7.55 & 69.22±0.59 & 80.49±0.45 & 19.88±2.58 & 79.31±0.46 & 75.24±0.41 \\
Adaptive & 31.32±10.27 & 56.43±0.47 & 80.79±0.07 & 29.75±8.32 & 55.48±0.55 & 80.73±0.06 & 25.87±6.87 & 54.29±0.57 & 80.76±0.07 \\
\textbf{Ours} & 29.94±11.77 & 61.31±0.65 & 80.95±0.09 & 33.67±13.36 & 63.46±0.68 & 80.92±0.15 & 30.73±15.99 & 65.61±1.28 & 81.02±0.09 \\
\bottomrule
\end{tabular}
\end{table*}

\begin{table*}[t]
\centering
\caption{Detailed results on CommonsenseQA with Llama3.1-8B-Instruct}
\scriptsize
\begin{tabular}{lcccccc}
\toprule
\multirow{2}{*}{\textbf{Method}} & \multicolumn{2}{c}{$\tau=0.5$} & \multicolumn{2}{c}{$\tau=0.8$} & \multicolumn{2}{c}{$\tau=1.0$} \\
\cmidrule(lr){2-3} \cmidrule(lr){4-5} \cmidrule(lr){6-7}
& Accuracy (\%) & Diversity (\%) & Accuracy (\%) & Diversity (\%) & Accuracy (\%) & Diversity (\%) \\
\midrule
Temperature & 76.69±0.54 & 72.72±0.13 & 75.71±0.66 & 80.23±0.34 & 73.63±0.73 & 85.47±0.70 \\
Top-p & 76.81±0.08 & 70.56±0.45 & 76.25±0.41 & 75.38±0.23 & 75.28±0.66 & 81.58±1.04 \\
Top-k & 77.33±0.32 & 73.05±0.56 & 76.23±0.84 & 78.52±0.47 & 74.76±1.28 & 82.51±0.56 \\
Eta & 77.17±0.60 & 72.28±0.38 & 75.22±0.60 & 80.19±0.35 & 74.25±0.64 & 85.78±0.58 \\
Mirostat & 77.20±0.40 & 69.80±0.33 & 76.38±0.46 & 71.99±0.66 & 76.54±0.53 & 73.73±0.47 \\
Typical & 77.22±0.24 & 69.80±0.21 & 75.81±0.29 & 75.19±0.36 & 75.18±0.34 & 82.27±0.22 \\
Adaptive & 77.17±0.19 & 69.65±0.26 & 77.38±0.27 & 69.31±0.09 & 77.20±0.08 & 69.20±0.22 \\
\textbf{Ours} & \textbf{77.44±0.31} & \textbf{69.35±0.46} & \textbf{76.72±0.56} & \textbf{71.68±0.23} & \textbf{76.90±0.79} & \textbf{72.93±0.58} \\
\bottomrule
\end{tabular}
\end{table*}

\begin{table*}[t]
\centering
\caption{Detailed results on StrategyQA with Llama3.1-8B-Instruct}
\scriptsize
\begin{tabular}{lcccccc}
\toprule
\multirow{2}{*}{\textbf{Method}} & \multicolumn{2}{c}{$\tau=0.5$} & \multicolumn{2}{c}{$\tau=0.8$} & \multicolumn{2}{c}{$\tau=1.0$} \\
\cmidrule(lr){2-3} \cmidrule(lr){4-5} \cmidrule(lr){6-7}
& Accuracy (\%) & Diversity (\%) & Accuracy (\%) & Diversity (\%) & Accuracy (\%) & Diversity (\%) \\
\midrule
Temperature & 73.80±0.92 & 80.00±0.99 & 72.34±1.11 & 81.10±0.57 & 69.02±2.31 & 82.49±1.19 \\
Top-p & 73.54±1.05 & 80.53±0.46 & 73.54±0.73 & 80.83±0.19 & 72.66±0.74 & 79.82±0.84 \\
Top-k & 75.28±0.94 & 80.00±0.35 & 72.20±1.04 & 80.34±0.35 & 72.52±1.40 & 81.72±0.60 \\
Eta & 74.06±1.32 & 79.15±0.91 & 71.99±0.84 & 80.92±0.71 & 70.39±0.82 & 82.32±0.30 \\
Mirostat & 74.12±1.08 & 79.26±0.38 & 75.46±1.05 & 80.15±0.54 & 74.50±1.52 & 80.60±0.61 \\
Typical & 75.05±0.77 & 79.65±0.23 & 73.57±0.67 & 80.36±0.17 & 72.02±0.99 & 81.10±0.55 \\
Adaptive & 75.11±0.72 & 80.43±0.22 & 75.17±0.50 & 80.40±0.45 & 75.14±0.88 & 80.22±0.68 \\
\textbf{Ours} & \textbf{75.14±0.52} & \textbf{79.83±0.38} & \textbf{74.50±1.93} & \textbf{79.85±0.44} & \textbf{74.61±1.18} & \textbf{80.86±0.23} \\
\bottomrule
\end{tabular}
\end{table*}

\begin{table*}[t]
\centering
\caption{Detailed results on WikiText-103 with Llama3.1-8B-Instruct}
\scriptsize
\begin{tabular}{lccccccccc}
\toprule
\multirow{2}{*}{\textbf{Method}} & \multicolumn{3}{c}{$\tau=1.0$} & \multicolumn{3}{c}{$\tau=1.2$} & \multicolumn{3}{c}{$\tau=1.5$} \\
\cmidrule(lr){2-4} \cmidrule(lr){5-7} \cmidrule(lr){8-10}
& MAUVE & Div (\%) & Coh & MAUVE & Div (\%) & Coh & MAUVE & Div (\%) & Coh \\
\midrule
Temperature & 25.41±7.06 & 51.26±1.58 & 77.32±0.33 & 2.87±0.60 & 97.03±0.92 & 57.53±0.63 & 1.42±0.23 & 99.81±0.01 & 51.92±0.10 \\
Top-p & 29.08±9.30 & 57.24±1.18 & 79.11±0.16 & 13.43±6.94 & 80.05±2.77 & 77.43±0.37 & 1.48±0.12 & 99.67±0.003 & 57.18±0.10 \\
Top-k & 24.82±5.30 & 59.87±1.17 & 79.11±0.16 & 22.88±7.51 & 59.90±1.02 & 78.87±0.21 & 27.70±8.38 & 77.78±0.85 & 79.15±0.18 \\
Eta & 21.20±9.53 & 62.71±0.96 & 79.36±0.25 & 12.68±2.62 & 85.85±0.39 & 76.02±0.60 & 1.46±0.26 & 99.62±0.02 & 57.71±0.09 \\
Mirostat & 14.54±2.03 & 36.06±1.88 & 78.26±0.23 & 14.44±1.70 & 38.29±1.76 & 78.48±0.20 & 16.45±4.45 & 47.38±1.24 & 78.69±0.18 \\
Typical & 24.94±8.00 & 61.15±0.70 & 79.36±0.26 & 5.46±1.28 & 89.39±1.02 & 71.16±0.30 & 1.23±0.10 & 99.66±0.007 & 56.29±0.11 \\
Adaptive & 8.71±0.86 & 21.63±0.77 & 77.42±0.07 & 6.94±1.48 & 20.62±0.66 & 77.54±0.20 & 6.36±1.85 & 20.50±0.91 & 77.34±0.06 \\
\textbf{Ours} & \textbf{15.04±3.88} & \textbf{35.18±0.93} & \textbf{78.14±0.20} & \textbf{14.70±1.35} & \textbf{40.16±1.52} & \textbf{78.41±0.19} & \textbf{29.43±2.48} & \textbf{51.52±0.92} & \textbf{78.86±0.27} \\
\bottomrule
\end{tabular}
\end{table*}

\begin{table*}[t]
\centering
\caption{Detailed results on CommonsenseQA with Qwen2.5-32B-Instruct}
\scriptsize
\begin{tabular}{lcccccc}
\toprule
\multirow{2}{*}{\textbf{Method}} & \multicolumn{2}{c}{$\tau=0.5$} & \multicolumn{2}{c}{$\tau=0.8$} & \multicolumn{2}{c}{$\tau=1.0$} \\
\cmidrule(lr){2-3} \cmidrule(lr){4-5} \cmidrule(lr){6-7}
& Accuracy (\%) & Diversity (\%) & Accuracy (\%) & Diversity (\%) & Accuracy (\%) & Diversity (\%) \\
\midrule
Temperature & 88.98±0.20 & 72.00±0.79 & 88.27±0.14 & 74.46±1.04 & 88.52±0.26 & 75.73±0.66 \\
Top-p & 88.63±0.26 & 71.71±0.55 & 88.21±0.41 & 74.71±0.58 & 88.19±0.34 & 76.41±0.70 \\
Top-k & 88.83±0.34 & 72.01±0.88 & 89.12±0.50 & 73.29±0.98 & 88.11±0.52 & 75.52±0.70 \\
Eta & 88.99±0.27 & 71.08±1.06 & 88.48±0.15 & 73.72±0.72 & 88.47±0.23 & 75.28±1.03 \\
Mirostat & 89.24±0.29 & 70.81±0.68 & 88.75±0.35 & 72.77±0.50 & 89.09±0.33 & 72.40±0.62 \\
Typical & 89.19±0.10 & 70.62±1.02 & 88.85±0.25 & 72.12±0.72 & 88.70±0.21 & 75.69±0.34 \\
Adaptive & 88.98±0.39 & 71.68±0.36 & 88.76±0.30 & 70.88±1.03 & 88.83±0.13 & 70.99±0.58 \\
\textbf{Ours} & \textbf{89.30±0.41} & \textbf{71.03±0.75} & \textbf{88.85±0.46} & \textbf{71.76±0.54} & \textbf{88.94±0.35} & \textbf{72.24±0.53} \\
\bottomrule
\end{tabular}
\end{table*}

\begin{table*}[t]
\centering
\caption{Detailed results on StrategyQA with Qwen2.5-32B-Instruct}
\scriptsize
\begin{tabular}{lcccccc}
\toprule
\multirow{2}{*}{\textbf{Method}} & \multicolumn{2}{c}{$\tau=0.5$} & \multicolumn{2}{c}{$\tau=0.8$} & \multicolumn{2}{c}{$\tau=1.0$} \\
\cmidrule(lr){2-3} \cmidrule(lr){4-5} \cmidrule(lr){6-7}
& Accuracy (\%) & Diversity (\%) & Accuracy (\%) & Diversity (\%) & Accuracy (\%) & Diversity (\%) \\
\midrule
Temperature & 80.29±0.76 & 21.43±1.26 & 79.39±0.69 & 29.83±1.06 & 79.04±1.00 & 38.62±0.93 \\
Top-p & 80.44±0.61 & 21.66±0.80 & 79.42±0.90 & 38.59±0.72 & 79.24±1.18 & 37.88±0.73 \\
Top-k & 79.77±0.54 & 21.07±0.28 & 80.38±0.63 & 28.54±1.24 & 78.57±0.72 & 37.50±0.73 \\
Eta & 80.15±1.20 & 21.57±0.52 & 78.95±0.56 & 29.09±1.46 & 79.39±0.30 & 38.50±1.70 \\
Mirostat & 80.32±0.77 & 19.79±0.63 & 80.12±0.93 & 22.33±0.63 & 80.15±0.24 & 25.56±0.85 \\
Typical & 80.06±0.85 & 21.53±0.75 & 80.32±0.56 & 29.87±1.45 & 79.45±1.15 & 36.71±0.75 \\
Adaptive & 80.32±0.67 & 17.55±0.69 & 80.79±0.84 & 18.64±0.63 & 80.99±0.53 & 17.71±0.42 \\
\textbf{Ours} & \textbf{80.35±0.89} & \textbf{18.23±1.05} & \textbf{80.52±0.59} & \textbf{20.74±0.62} & \textbf{80.23±0.88} & \textbf{24.18±0.33} \\
\bottomrule
\end{tabular}
\end{table*}

\end{document}